\newcommand{\taco}[1]{\texttt{TACO-Net}}
\newtheorem{theorem}{Theorem}
\title{TACO-Net: Topological Signatures Triumph in \\3D Object Classification}
\date{}
\author{%
  Anirban Ghosh, Ayan Dutta \\
  School of Computing\\
  University of North Florida\\
  Jacksonville, FL \\
  \texttt{anirban.ghosh,a.dutta@unf.edu} \\
}
\begin{document}

\maketitle
\begin{abstract}
  3D object classification is a crucial problem due to its significant practical relevance in many fields, including computer vision, robotics, and autonomous driving. Although deep learning methods applied to point clouds sampled on CAD models of the objects and/or captured by LiDAR or RGBD cameras have achieved remarkable success in recent years, achieving high classification accuracy remains a challenging problem due to the unordered point clouds and their irregularity and noise. To this end, we propose a novel state-of-the-art (SOTA) 3D object classification technique that combines topological data analysis with various image filtration techniques to classify objects when they are represented using point clouds. We transform every point cloud into a voxelized binary 3D image to extract distinguishing topological features. Next, we train a lightweight one-dimensional Convolutional Neural Network (1D CNN) using the extracted feature set from the training dataset. Our framework, \taco~, sets a new state-of-the-art by achieving $99.05\%$ and $99.52\%$ accuracy on the widely used synthetic benchmarks ModelNet40 and ModelNet10, and further demonstrates its robustness on the large-scale real-world OmniObject3D dataset. When tested with ten different kinds of corrupted ModelNet40 inputs, the proposed \taco~ demonstrates strong resiliency overall. 
\end{abstract}

\section{Introduction}

\begin{figure}[h]

    \centering
    \includegraphics[width=1\linewidth]{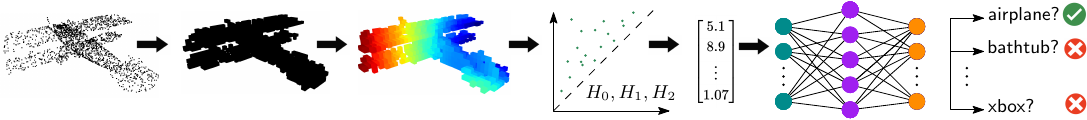}
    \caption{An airplane point cloud (from ModelNet40) is converted into a 3D binary image, which is then transformed into a set of 3D grayscale images (just one is shown) using different filtration techniques. Every grayscale image admits a separate cubical persistence. A feature vector of length $36$ is obtained for every persistence. The vectors are then concatenated to form the final feature vector for the plane. The feature vectors are finally trained using a 1D CNN for object classification. }
    \label{fig:intro}

\end{figure}

Semantic object recognition is one of the most fundamental capabilities that modern-day autonomous systems, such as robots and cars, demand to operate in dynamic real-world environments. Given an input point cloud, the objective is to classify the input into one of the known categories~\cite{maturana2015voxnet,qi2017pointnet,wu20153d}. Such object classification is critical in numerous real-world applications, including autonomous driving, robotics, augmented reality, and 3D scene understanding. In recent years, deep learning techniques have achieved remarkable success in the 3D object classification task by using various modes of inputs such as voxels~\cite{maturana2015voxnet}, multi-views~\cite{kanezaki2019rotationnet}, and raw point clouds~\cite{qi2017pointnet}, to name a few. 

Unlike 2D images, point clouds provide rich geometric and spatial information in three dimensions, enabling more precise object recognition in complex environments~\cite{sarker2024comprehensive}. Further, unlike 2D images, point clouds enable a mobile robot, for example, to recognize objects in various ambient light and weather conditions efficiently. However, point cloud-based object classification remains challenging due to the unstructured and sparse nature of point cloud data, which lacks a regular grid structure and often suffers from noise, occlusion, and varying point densities~\cite{ben20183dmfv,sun2022benchmarking,uy2019revisiting}. Furthermore, the permutation invariance of points and the need to capture both local and global geometric features add layers of complexity to model design~\cite{zhao2022rotation}. These factors demand innovative methodologies that effectively learn from unordered and irregular 3D geometric data, driving continued research and development in this field.

Most of the existing approaches use a deep machine learning framework where the input is either a set of pictures of the object, a raw set of points, or volumetric shape of the object~\cite{su2015multi,feng2018gvcnn,kanezaki2019rotationnet,qi2017pointnet,qi2017pointnet++,qian2022pointnext,maturana2015voxnet}. Unlike these, we take a novel approach where the $n$-element point clouds sampled from the 3D objects (both from training and test sets) are transformed into voxelized 3D binary images to extract features from it using topological data analysis (TDA) via cubical persistence (defined in Section~\ref{sec:tda}). 

We deploy a 1D CNN, trained using the topological feature vectors obtained for the 3D objects from the training set for the class prediction task. See Fig.~\ref{fig:intro} for an overview of our approach.
Although TDA has a direct connection with shapes, surprisingly, TDA has not been successfully used for large-scale 3D object classification. Similar to the challenges associated with designing deep learning models using existing network layers, finding an effective TDA pipeline presents a challenge. First, we test the proposed \textbf{\underline{t}}opological data \textbf{\underline{a}}nalysis-based obje\textbf{\underline{c}}t classificati\textbf{\underline{o}}n framework, named \taco~, on {ModelNet40} and {ModelNet10} datasets. Our experiments show that we achieve SOTA accuracy for both these datasets. Furthermore, we have chosen ten common corruptions to test the robustness of \taco~. The corrupted dataset is tested on the model trained with the uncorrupted {ModelNet40} dataset. Two different levels of corruption have been used in our experiments. Results show that the proposed \taco~ yields high accuracy for all but one corruption type at a low level while achieving moderate accuracy with highly corrupted test data. Further, when tested on a real-world dataset, namely OmniObject3D~\cite{wu2023omniobject3d}, \taco~ again achieved the highest accuracy. To show the generalizability of \taco~, we tested it on two 3D medical object datasets, namely VesselMNIST3D~\cite{yang2020intra} and AdrenalMNIST3D~\cite{yang2023medmnist}, where \taco~ surpassed the highest accuracies and F1-scores of numerous existing techniques such as PointNet~\cite{qi2017pointnet}, PointNet++~\cite{qi2017pointnet++}, and DGCNN~\cite{wang2019dynamic}, among others. The main contributions of our paper are as follows.

\begin{itemize}[leftmargin=*]\itemsep-2pt
\item To the best of our knowledge, this is the first work that uses TDA through cubical persistence to extract features from input point clouds before learning those features using a 1D CNN.
    \item Our proposed novel \taco~ framework achieves higher overall accuracies in both 10 and 40-class variations of the ModelNet dataset - thereby providing a new SOTA performance.
    \item Results show that \taco~ is robust against common types of point cloud corruptions while being easily generalizable to various real-world 3D object datasets.
\end{itemize}

\textbf{Related Work. }Three main types of approaches are prevalent in the 3D object classification literature: voxel-based, multi-view imaging-based, and raw point-based~\cite{sarker2024comprehensive}. Many hybrid methods combine one or more of the above-mentioned techniques. In voxel-based methods, features of the volumetric representation of input point clouds are learned and classified~\cite{maturana2015voxnet}. One of the earliest approaches in this direction is 3D Shapenets~\cite{wu20153d}. Although the objects to be classified are in 3D, taking 2D pictures of them from various angles and classifying those 2D pictures instead has gained attention through MVCNN by Su et al.~\cite{su2015multi}, where they used $80$ pictures of each 3D object. GVCNN~\cite{feng2018gvcnn} improved upon MVCNN by using only $8$ images. MHBN~\cite{yu2018multi}, on the other hand, used only $6$ views of the object, but managed to achieve high mean class accuracy. Usually, convolutional neural networks are used for these 2D image classification techniques. In \cite{ma2018learning}, the authors have used a recurrent neural module along with CNNs. Hypergraph learning has been highly effective for object classification, as shown in \cite{feng2019hypergraph}. One of the highest accuracy yielding approaches, RotationNet~\cite{kanezaki2019rotationnet}, also uses multiple views of the objects, albeit these are unsupervised viewpoints. Point-based approaches are most popular - they take raw point clouds as inputs and learns from their unstructured format, which makes them robust against corruption~\cite{qi2017pointnet,qi2017pointnet++,li2018pointcnn}. One of the pioneering works in this direction is PointNet~\cite{qi2017pointnet}. PointNet++~\cite{qi2017pointnet++} improved upon PointNet by capturing local geometric structures. Transformer-based learning strategies have received attention as well~\cite{zhao2021point}. Graph neural networks have been successful in classifying point cloud objects~\cite{wang2019dynamic,mohammadi2021pointview}. Unlike these, our novel methodology extracts topological features, which are learned by a 1D CNN for object classification and achieves SOTA accuracy.

This paper uses TDA features extracted from the point clouds for object classification. Such a TDA-based approach has been previously used for MNIST data classification~\cite{garin2019topological}. TDA has recently been used to solve a diverse range of problems, such as in medical imaging~\cite{singh2023topological}, biomedicine~\cite{skaf2022topological}, oncology~\cite{bukkuri2021applications}, and cybersecurity~\cite{akcora2020bitcoinheist}.

\section{Description of \texttt{TACO-Net}}\label{sec:tda}

We transform every train and test point cloud $P$ into a 3D binary image, where every active voxel (represented using a $1$) contains at least one point from $P$. Our experiments determine a suitable value for the voxel size of the 3D images. The primary purpose of converting point clouds to 3D binary images is to use different 3D grayscale image filtration techniques to extract distinguishing topological information about the point clouds through their corresponding 3D grayscale images, using TDA. In what follows, we present an overview of the theoretical underpinnings of TDA, leveraged to develop~\taco~.

\textbf{Filtration types}~\cite{garin2019topological}. Let $\mathcal{B}:I \subseteq \mathbf{Z}^3 \rightarrow \{0,1\}$ be a 3D binary image, where every $p\in I$ is a voxel. A voxel is \textit{activated} if its value is $1$; otherwise, it is \textit{deactivated}. A grayscale filtration converts $\mathcal{B}$ into a grayscale 3D image $\mathcal{G}: I \subseteq \mathbf{Z}^3 \rightarrow \mathbf{R}$. Such filtrations can highlight different topological features in the binary image, even visually. We briefly describe the six kinds of filtrations used in~\taco~ to obtain a set of grayscale 3D images for every 3D binary image (constructed for every point cloud) for extracting topological feature vectors. Owing to the difference in the filtration functions, every filtration tends to highlight different features of $\mathcal{B}$. Refer to Fig.~\ref{fig:filters} for an illustration. %Thus, a set of 3D grayscale images is constructed for every point cloud. 

\begin{figure}[h]
  \centering
  \begin{tabular}{cccccccc}
  \includegraphics[scale=0.05]{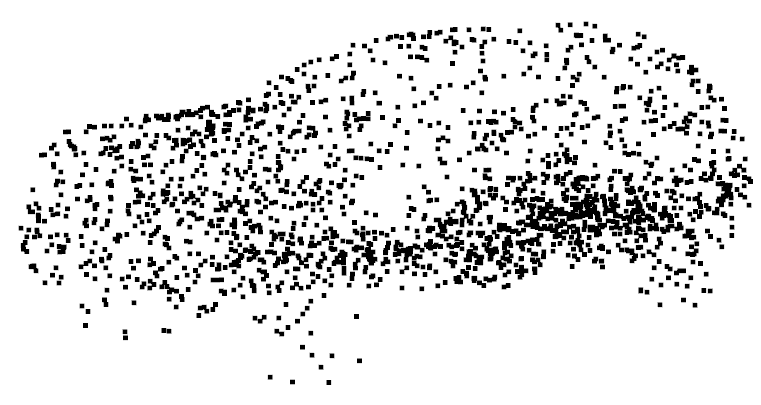} & 
      \includegraphics[scale=0.05]{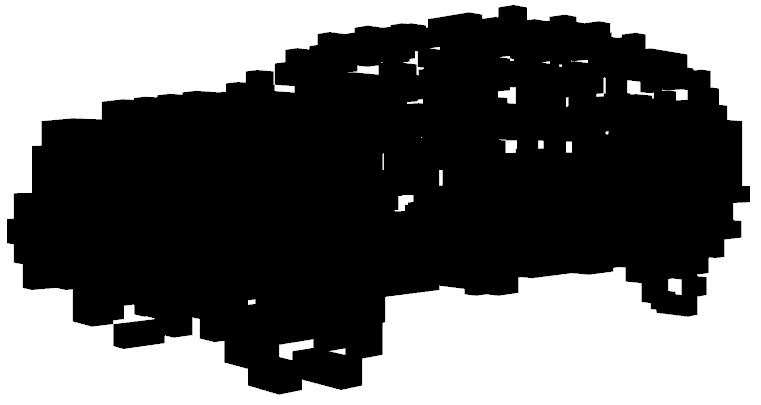} & \includegraphics[scale=0.05]{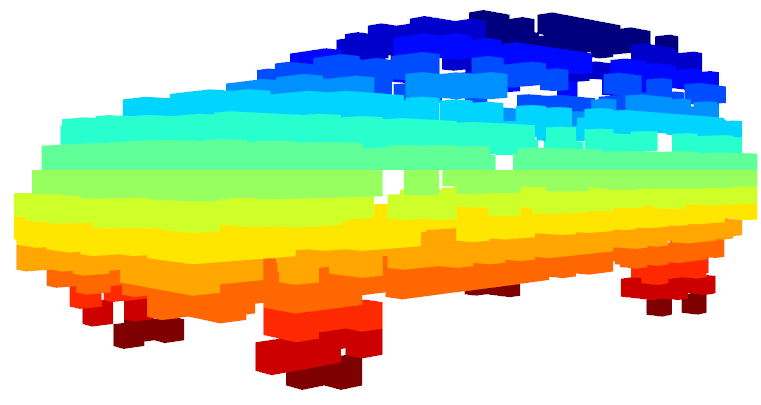} & \includegraphics[scale=0.05] {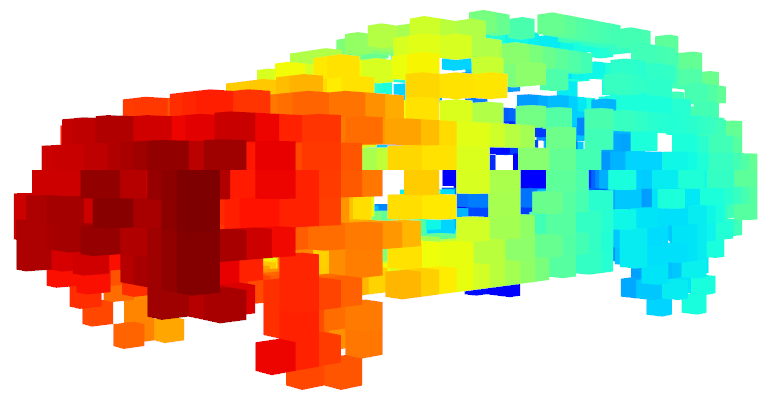} &
      \includegraphics[scale=0.05]{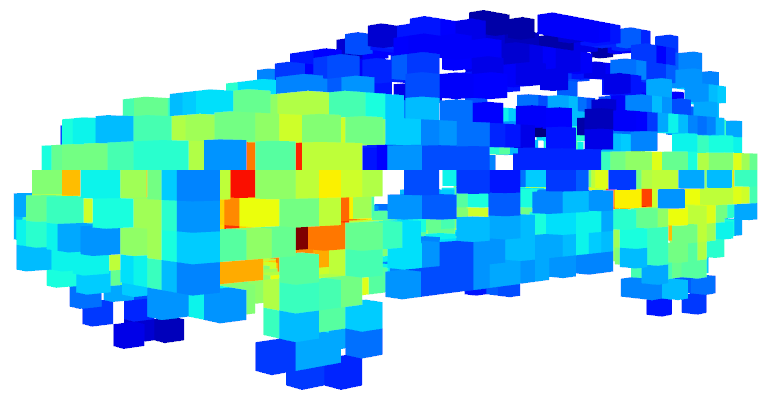} &
      \includegraphics[scale=0.05]{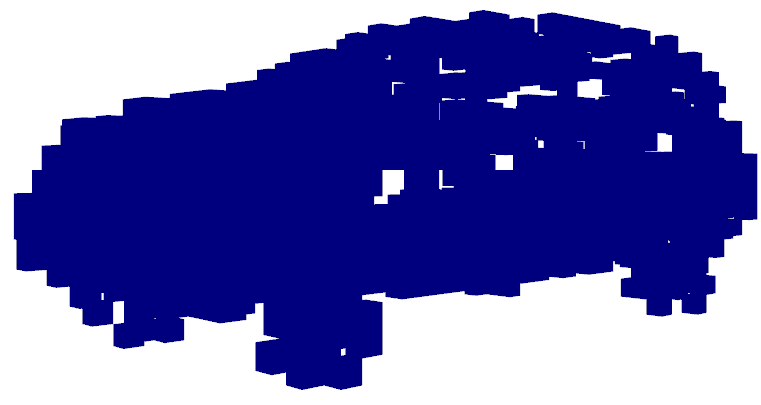} &
      \includegraphics[scale=0.05]{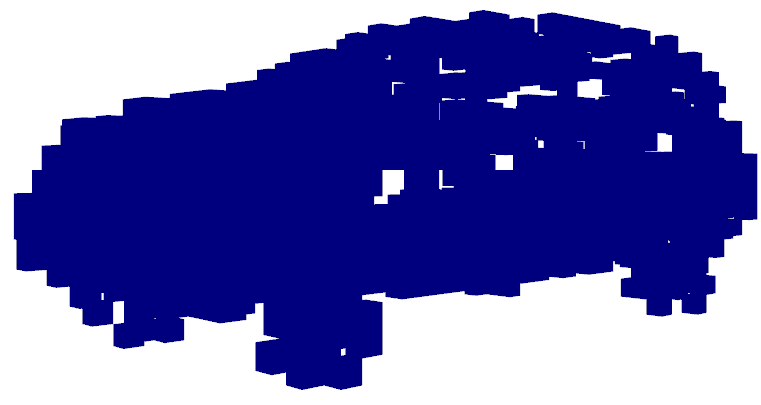} &
      \includegraphics[scale=0.05]{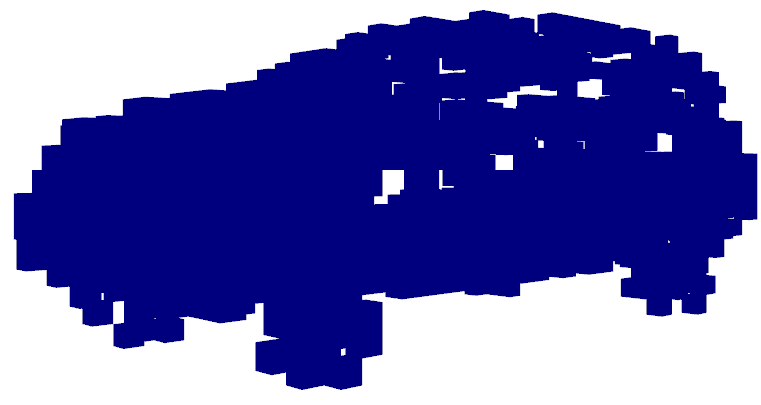}\\
      (a) & (b) & (c) & (d) & (e) & (f) & (g)  & (h)
  \end{tabular}
   
  \caption{A $2048$-element point cloud of a car from ModelNet40 is shown in (a) and its 3D binary image with voxel size $0.05$ in (b). The grayscale images obtained after height using $v:(-1,0,0)$, radial using $c:(4,4,10)$, density, dilation, erosion, and signed distance filtrations, are shown in (c), (d), (e), (f), (g), (h), respectively. Hotter voxels have higher grayscale values. For brevity, voxels outside the shape are not shown; consequently, (f), (g), and (h) appear almost the same.}
  \label{fig:filters}
\end{figure}

\textit{Height filtration.} It needs a direction vector $v$ in $3$-space. Every activated voxel $p \in \mathcal{B}$ is assigned a grayscale value that equals the distance between $p$ and the hyperplane defined by $v$. Every deactivated voxel is assigned the maximum distance between any voxel of $\mathcal{B}$ and the hyperplane defined by $v$, plus one. For \taco~, we have considered the $26$ direction vectors in $\{\{0,1,-1\}^3 \}\setminus \{ (0,0,0) \}$.

\textit{Radial filtration.} A reference voxel $c$, called \textit{center}, is supplied. Every activated voxel $p \in \mathcal{B}$ is assigned the distance between $p$ and $c$. The deactivated voxels are assigned the maximum distance between $c$ and any voxel, plus one. For \taco~, $27$ centers $c_1,c_2,\ldots,c_{27}$ have been considered, chosen as the $27$  vertices of a $3\times 3 \times 3$ grid $\Xi_\mathcal{B}$ inside $\mathcal{B}$. We note that $\mathcal{C}_1 :=[c_1,\ldots,c_9]$ belong to the first vertical slice of $\Xi_\mathcal{B}$ having the lowest $x$-coordinate, $\mathcal{C}_2 :=[c_{10},\ldots,c_{18}]$ the median, and $\mathcal{C}_3 :=[c_{19},\ldots,c_{27}]$ the highest. The centers are sorted lexicographically in every $\mathcal{C}_i$. Refer to Fig.~\ref{fig:radial} for an example.

The strength of our 26 height directions is that directional height filtrations discretize the Persistent Homology Transform~\cite{turner2014persistent}, which is injective on a broad class of shapes; hence, in principle, the family of height persistence diagrams determines the underlying shape with high accuracy, as shown later empirically. Our cube‑symmetric set of 26 directions provides an efficient spherical sampling that harmonizes with cubical complexes, producing salient and stable topological events that differ across object categories. By the stability of persistent homology, these diagrams are robust to noise. Complementing height with a set of $27$ radial filtrations, from carefully placed centers, and the following four filtrations, injects information about interior organization, yielding consistent gains. This is corroborated by the high accuracy numbers obtained for shape classification (see Sec.~\ref{sec:exp}).

\begin{wrapfigure}{r}{0.34\textwidth}
    \centering
    \vspace*{-30pt}
\includegraphics[scale=0.3]{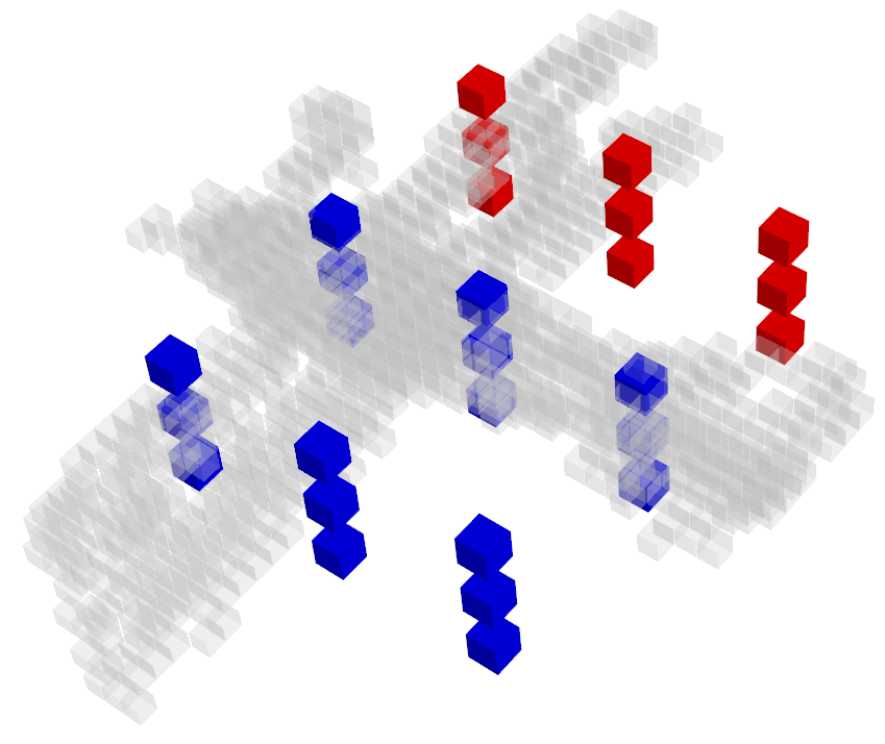}
  \caption{The $27$ radial centers are shown for an airplane 3D binary image. For ModelNet40 and ModelNet10, we have used the centers $c_1,\ldots,c_{18}$, as shown in blue.}
      \label{fig:radial}
      \vspace{-30pt}
\end{wrapfigure}

\textit{Density filtration.} Every voxel $p\in \mathcal{B}$ is assigned a grayscale value equal to the number of activated voxels within a ball centered at $p$ having radius $r$. We fixed $r$ to $1$ in our experiments.

\textit{Dilation filtration.} Every voxel $p\in \mathcal{B}$ is assigned a grayscale value equal to the smallest Manhattan distance to an activated voxel in $\mathcal{B}$. Consequently, active voxels are assigned a $0$ grayscale value.

\textit{Erosion filtration.} It does the opposite of the dilation filtration. The dilation filtration is applied to the binary image $\mathcal{B}'$, obtained from $\mathcal{B}$ by changing activated voxels to deactivated and deactivated ones to activated. Deactivated voxels are assigned a $0$ grayscale value.

\textit{Signed distance filtration.} For every activated voxel $p\in \mathcal{B}$, its grayscale value is the minimum Manhattan distance between $p$ and any deactivated voxel in $\mathcal{B}$ minus $1$. For every deactivated voxel, its grayscale value is the negative of the minimum Manhattan distance between $p$ and any activated voxel in $\mathcal{B}$.

TDA~\cite{chazal2021introduction, wasserman2018topological} can extract topological information and geometric patterns from datasets using algebraic topology. Persistent homology~\cite{edelsbrunner2002topological, zomorodian2004computing} is a popular tool in TDA, applied to obtain different kinds of topological feature vectors of point clouds. It helps to understand the shape of a point cloud by tracking the birth and death of various topological features (different from feature vectors), such as connected components, holes, and higher-dimensional voids that persist at different scales during an iterative process known as \textit{filtration} (distinct from the six types of filtration mentioned above). A series of nested geometric structures is obtained at various scales during filtration.  The topological features that persist (survive) across several iterations of a filtration, inside various sequences of such nested structures, can be used as topological descriptors to compute different topological feature vectors of a point cloud. For a comprehensive overview on persistent homology and various filtration techniques, we urge interested readers to refer to~\cite{chazal2021introduction, edelsbrunner2002topological, wasserman2018topological,zomorodian2004computing}. We use cubical homology and persistence meant for extracting topological information from cubical complexes.

\textbf{Cubical homology and persistence}.~\cite{kaczynski2006computational, wagner2011efficient} A finite \textit{cubical complex} in $3$-space is a union of points, line segments, squares, 3D cubes, aligned on the grid $\mathbf{Z}^3$.
We leverage \textit{cubical homology}, a variant of persistent homology meant for cubical complexes, to obtain topological feature vectors of grayscale 3D images, which are obtained from 3D binary images, constructed for every point cloud. 
 Any grayscale 3D image can be perceived as a cubical complex $K$, where every voxel (a pixel in 3D) is a cube with an intensity value.  The voxels, square faces of voxels, edges, and vertices are $3, 2, 1, 0$-cube, respectively. Hence, cubical homology can be applied directly to 3D grayscale images because of their natural grid-like structures. During filtration, the voxels are added in order of increasing intensity, forming a sequence of nested cubical subcomplexes. Starting with the lowest voxel intensity, all voxels with intensity at most $t$ are added to the current cubical complex along with their faces, edges, and vertices, where $t$ is the current voxel intensity being considered. A cubical complex obtained at step $i$ is a subcomplex of the complex obtained at step $i+1$. Thus, we get a sequence of nested subcomplexes $K_0 \subseteq K_1 \subseteq \ldots \subseteq K_m$.  Every $K_i$ is called a \textit{sublevel set} of $K$, the cubical complex built from a given 3D image, since $K_i\subseteq K$.
  As voxels are added, the topological features, connected components of voxels (homology group $H_0$), tunnels or loops (homology group $H_1$), and enclosed cavities (homology group $H_2$), take birth or die. Every birth and death of a feature introduces a new birth-death pair in the  \textit{cubical persistence}, a multiset of points in $\mathbf{R} \times (\mathbf{R} \cup \{+\infty\})$, where every pair $(b,d)$ in the multiset denotes the birth of a topological feature at time $b$ and its death at time $d$. Long-surviving features are likely the significant features that can be used in classification tasks. Persistence, represented using a 2D scatter plot, is known as a \textit{persistence diagram} (refer to Fig. \ref{fig:intro}). In a persistence diagram, every birth-death pair corresponds to a point in the diagram. The cubical persistence of a cubical complex is its \textit{topological signature}. Next, we discuss the topological vectorization methods used here.

\textbf{Persistent entropy.}~\cite{chintakunta2015entropy} Given a cubical persistence, $X=\{(b_i,d_i)\}$, its persistent entropy, denoted by $\rho(X)$, is a real number defined by as,
$\rho(X) = -\sum_i p_i \log (p_i) $, where $p_i = \frac{d_i - b_i}{\ell(X)}$, and $\ell(X) = \sum_i (d_i - b_i)$. Having its roots in information theory, it gives an intuitive sense of disorder or complexity in the topological structure. We note that $3$ real numbers are obtained for the $3$ homology groups, $H_0,H_1, \text{ and } H_2$.  

\textbf{Amplitude.} Introduced in~\cite{garin2019topological}, \textit{amplitude} of a cubical persistence is defined as its distance to the empty persistence (devoid of birth-death pairs). It is used to compare two cubical persistences, obtained from two different 3D grayscale images. ~\taco~ uses five types of amplitudes with varied parameters. Out of the five, two are metric-based (the Wasserstein and Bottleneck distances), and the remaining three are kernel-based (Betti curve, persistence landscape, and heat). For the Betti curve, and persistence landscape, the diagrams are sampled using $100$ filtration values, whereas for the heat kernel, $20$ are used. Let $X=\{(b_i,d_i)\}$ be a cubical persistence. For an insight into the different kinds of amplitudes used, we recommend that the reader refer to the Appendix.

\textit{$p$-Wasserstein}~\cite{tauzin2021giotto}. The \textit{half-lifetime} of a pair $(b_i,d_i)\in X$ is defined as $\frac{d_i-b_i}{2}$. The Wasserstein amplitude of order $p$, denoted by $W(X,p)$, is defined as the $L_p$ norm of the vector of half-lifetimes of the birth-death pairs in $X$. Hence, $W(X,p) = ( \sum_{i} \left( \frac{d_i - b_i}{2}\right)^p )^{1/p}.$ For~\taco~, we have used $p=1,2$. We obtain $6$ real numbers for this metric, since there are $3$ homology groups and $2$ values of $p$.

\textit{Bottleneck}~\cite{tauzin2021giotto}. The Bottleneck amplitude is denoted by  $B(X) = W(X,\infty)$. We obtain $3$ real numbers for this metric due to the three homology groups.

\textit{Betti curve.~\cite{tauzin2021giotto}} The Betti curve of $X$ is the function $B_C:\mathbf{R} \rightarrow \mathbf{N}$, such that $B_C(s)$ gives the number of birth-pairs in $X$ that contains $s$ when every pair $(b_i,d_i)$ in $X$ is treated as an interval. Two amplitudes are obtained using the $L_1$ and $L_2$ norms. We obtain $6$ real numbers for this metric, since there are three homology groups and two norms.

\textit{Landscape}~\cite{bubenik2017persistence, bubenik2015statistical}. For a birth-death pair  $(b_i,d_i) \in X$, let $f_{(b_i,d_i)} : \mathbf{R} \rightarrow [0,\infty]$, be a piecewise linear function given in Eq. \ref{eq:landscape}. 

\begin{wrapfigure}{r}{0.5\textwidth}

	\begin{equation}
		f_{(b_i,d_i)} (x) =  \begin{cases}
			0 & \text{if } x \notin (b_i,d_i) \\
			x-b_i & \text{if } x \in (b_i,\frac{b_i+d_i}{2}] \\
			-x+d_i & \text{if } x \in  (\frac{b_i+d_i}{2}, d_i)
		\end{cases}
		\label{eq:landscape}
	\end{equation}

\end{wrapfigure}

The \textit{persistence landscape} of $X$ is the sequence of functions $\lambda_k : \mathbf{R} \rightarrow [0, \infty], k =1, 2, 3, \ldots$ where $\lambda_k (x)$ is the $k$-th largest value of $\{f_{(b_i,d_i)}(x)\}_i$. Further, $\lambda_k(x)$ is set to $0$ if the $k$-th largest value does not exist. 
The parameter $k$ is called the \textit{layer}. For~\taco~, we have used $k=1,2$. Four amplitudes are obtained using $L_1$ and $L_2$ norms for both the values of $k$. We get $12$ real numbers for this metric, since there are three homology groups, two norms, and two distinct values of $k$.

\textit{Heat kernel}~\cite{reininghaus2015stable}. Gaussians of standard deviation $\sigma$ are placed over every point in $X$ and a negative Gaussian of $\sigma$ on the mirror point across the diagonal line in the persistence diagram. Thus, a real-valued function is obtained on $\mathbf{R}^2$. For \taco~, we have used $\sigma=0.15$. We get $6$ real numbers for this metric, since there are three homology groups and two norms, $L_1,L_2$.

Hence, for a given 3D grayscale image, obtained by using a filtration, we get a feature vector of length $3 + 33 = 36$, wherein $3$ numbers are obtained using persistent entropy and the remaining $33$ using amplitude. 

\textbf{Feature selection and generation.} Refer to Fig.~\ref{fig:pipeline} in Appendix for an illustration. Let $P$ be a point cloud describing some 3D object. We convert $P$ into a voxelized 3D binary image $\mathcal{B}$ such that every active voxel contains at least one point from $P$. The volume of $\mathcal{B}$ is roughly equal to that of the axis-parallel bounding box of $P$. We run $57$ filtrations on $\mathcal{B}$ yielding a set of $57$ grayscale images. Out of $57$ filtrations, there are $26$ height filtrations for the $26$ direction vectors in $\{\{0,1,-1\}^3 \}\setminus \{ (0,0,0) \}$; $27$ radial filtrations for the $27$ centers $c_1,\ldots,c_{27}$, as described in Section~\ref{sec:tda}; one each for the four types: density, dilation, erosion, and signed distance. As explained before, we extract $36$ features from a grayscale image. Hence, due to the $57$ filtrations used, which resulted in $57$ binary images, the length of the final feature vector for $P$ is $57 \cdot 36 = 2052$. However, our experiments found that depending on the dataset, we must discard some of the radial filtrations from the initial $27$ centers, as shown in Fig. \ref{fig:radial} to achieve the highest possible accuracy.

\usetikzlibrary{positioning}

%\begin{document}

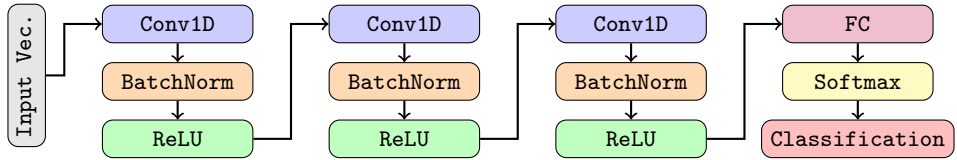
\begin{figure}[h!]
\centering
\begin{tikzpicture}[
    layer/.style={rectangle, draw, minimum height=0.5cm, minimum width=2cm, text centered, font=\sffamily\footnotesize},
    conv/.style={layer, fill=blue!20, rounded corners},
    bn/.style={layer, fill=orange!30, rounded corners},
    relu/.style={layer, fill=green!25, rounded corners},
    fc/.style={layer, fill=purple!25, rounded corners},
    softmax/.style={layer, fill=yellow!30, rounded corners},
    input/.style={layer, fill=gray!20, rounded corners},
    output/.style={layer, fill=red!25, rounded corners},
    arrow/.style={->, thick},
]

% Horizontal block: Input to last Conv
%\node[input]      (input)      {Input ($f$ features)};
\node[rounded corners, rotate=90, rectangle, draw, fill=gray!20, font=\sffamily\footnotesize] (input) at (0,0) {\texttt{Input Vec.}};

\node[conv, right=of input, yshift=-0.27cm]    (conv1)  {\texttt{Conv1D}};
\node[bn,   below=0.25cm of conv1]   (bn1)    {\texttt{BatchNorm}};
\node[relu, below=0.25cm of bn1]      (relu1)  {\texttt{ReLU}};

\node[conv, right=1cm of conv1]    (conv2)  {\texttt{Conv1D}};
\node[bn,   below=0.25cm of conv2]          (bn2)    {\texttt{BatchNorm}};
\node[relu, below=0.25cm of bn2]            (relu2)  {\texttt{ReLU}};

\node[conv, right=1cm of conv2]    (conv3)  {\texttt{Conv1D}};
\node[bn,   below=0.25cm of conv3]          (bn3)    {\texttt{BatchNorm}};
\node[relu, below=0.25cm of bn3]            (relu3)  {\texttt{ReLU}};

% Vertical block: FC, Softmax, Output
% \node[fc, below=0.5cm of relu2]       (fc1)    {Fully Connected (numClasses)};
% \node[softmax, below=0.25cm of fc1]          (softmax){Softmax};
% \node[output, below=0.25cm of softmax]       (output) {Classification Output};
% FC and output layers to the right of conv3
\node[fc, right=1cm of conv3]       (fc1)    {\texttt{FC}};
\node[softmax, below=0.25cm of fc1]          (softmax){\texttt{Softmax}};
\node[output, below=0.25cm of softmax]       (output) {\texttt{Classification}};

% Arrows (horizontal flow)
\draw[arrow] (input) -- ++(0.5,0) |- (conv1);
\draw[arrow] (conv1) -- (bn1);
\draw[arrow] (bn1) -- (relu1);
\draw[arrow] (relu1.east) -- ++(0.5,0) |- (conv2);
\draw[arrow] (conv2) -- (bn2);
\draw[arrow] (bn2) -- (relu2);
\draw[arrow] (relu2.east) -- ++(0.5,0) |- (conv3);
\draw[arrow] (conv3) -- (bn3);
\draw[arrow] (bn3) -- (relu3);

% Arrow down to FC block
%\draw[arrow] (relu3) -- (fc1);
\draw[arrow] (relu3.east) -- ++(0.5,0) |- (fc1);

% Vertical flow: FC to output
\draw[arrow] (fc1) -- (softmax);
\draw[arrow] (softmax) -- (output);

\end{tikzpicture}
\caption{The architecture of the 1D CNN used in \taco~. Convolutional layers process the input feature vector before final classification via fully connected (FC) and softmax layers.}
\label{fig:nn_architecture}
\end{figure}

%\end{document}

    %%%%%%%%% table : settings %%%%%%%%%%%%
\begin{wraptable}{r}{0.34\textwidth}
    \vspace{-14pt}
    \centering
    \caption{Parameters and values}
    \addtolength{\tabcolsep}{-0.4em}

    \begin{tabular}{cc}
    \toprule
        Parameters & Values\\ \midrule
        Max. training epoch & 1000\\
        Loss stop threshold & 0.005\\
        Learning rate & 0.001\\
        Minibatch size & 128\\
        Optimizer & Adam\\ 
        Voxel size & 0.05\\
        \bottomrule
    \end{tabular}
    \label{tab:params}
    \vspace{-10pt}
\end{wraptable}
\textbf{Classification using a 1D CNN.} We use a lightweight 1D CNN deep neural network to classify the features extracted from the point clouds of the objects. In recent years, 1D CNN has been used extensively for such feature and sequence classification with high success~\cite{kiranyaz20211d}. Our network has three 1D CNN layers, each followed by batch normalization and ReLU layers. The three CNN layers after the input have filter sizes of $3, 5, $ and $7$ respectively, whereas the number of filters in the first two layers is $128$ and $64$, respectively. In the third CNN layer, the filters are set to the class count for ModelNet40 and ModelNet10 datasets and $32$ for the VesselMNIST3D and AdrenalMNIST3D datasets.
After the three consecutive 1D CNN layers, we have a fully connected layer of size equal to the number of classes. Next, the classification is done by applying a softmax function on the outputs of the fully connected layer. Refer to Fig.~\ref{fig:nn_architecture}. The time taken for classification is $\mathcal{O}(n + v^3 + v/\rho^3 )$, where $n$ is the size of the point cloud, $v$ is the number of voxels in $\mathcal{B}$, and $\rho$ is the voxel-size used. See Appendix for a proof.

\section{Experiments}
\label{sec:exp}

\noindent
\textbf{Settings. }
We have used six datasets to validate the efficacy of \taco~and they are ModelNet10/40 (by far the most popular benchmark for this problem), OmniObject3D (a real-world dataset consisting of $190$ classes), ScanObjectNN (a real-world noisy dataset with $15$ classes), and two real-world binary medical datasets for further generalizability, namely VesselMNIST3D and AdrenalMNIST3D. More details about these datasets are provided in the Appendix (Sec. \ref{sec:datasets}). Our proposed \taco~has achieved SOTA accuracy in five out of these six test datasets.

We have implemented all TDA-related portions of \taco~ in Python using the \texttt{giotto-tda} package~\cite{tauzin2021giotto}. The experiments were run on a machine equipped with an Intel \texttt{i9-12900K} processor, $32$-GB of main memory, and a NVIDIA RTX 3060 GPU. For all the datasets, we evaluate the performance on two main metrics: overall accuracy (OA - average accuracy \% across all test cases) and mean class accuracy  (mAcc - mean accuracy \% across all classes). These are the most common evaluation metrics in object classification. For VesselMNIST3D, we also present the F1-score metric due to imbalance.

For ModelNet40, we have found the vector length $1728$ to be the optimum in terms of OA. The exact number of filtrations that corresponds to this length is $48$, out of which $26$ are height, one each from the four types: density, dilation, erosion, and signed distance, and $18$ are radial corresponding to the $18$ centers $c_1,\ldots,c_{18}$, as described in Section~\ref{sec:tda} and shown in Fig. \ref{fig:radial}. Therefore, all the results presented below are with $1728$-length feature vectors. We have used the same vector length input for ModelNet10 as well. The optimum length for VesselMNIST is $1152$ using the two centers $c_1,c_2$, and for AdrenalMNIST it is $1584$ using $c_1,\ldots,c_{14}$. Other relevant parameters and their values for TDA-related experiments are mentioned in Section~\ref{sec:tda}.

We used MATLAB to implement the 1D CNN. We stopped our training early if the training loss had reached $0.005$. Each configuration has been trained $5$ times, and the average results are presented in the paper unless specified otherwise. The learnable parameters for ModelNet40, ModelNet10, VesselMNIST, and AdrenalMNIST are $0.72$M, $0.71$M, $0.50$M, $0.66$M, respectively. The parameters used in our experiments and their values are listed in Table \ref{tab:params}.

\subsection{Results}
\noindent
\textbf{ModelNet10/40. }
First, we present the results of testing \taco~ on ModelNet40 and 10 datasets. To begin with, we first illustrate the empirical reason behind choosing 18 radial filtration centers along with DEDS and height filters for the ModelNet40 dataset. This result is presented in Fig. \ref{fig:MN40_length_pr}(a). As can be seen, with feature vector length $1728$, i.e., 18 radial filtration filters, the OA is the highest. Although with the different other feature lengths, the OA is close, but lower than the one with length = $1728$. Next, we have tested \taco~ with different voxel sizes to create the 3D binary image from the given point cloud. We have noticed that with a voxel size $0.05$, the OA is the highest. With $0.03$ and $0.07$, the accuracy values decrease to 98.61 (OA), 96.96 (mAcc), and 98.30 (OA) and  96.16 (mAcc), respectively.

Next, the benchmark results for both 40- and 10-class variants of the ModelNet dataset are presented in Tables \ref{tab:modelnet40_results} and \ref{tab:modelnet10_results} (in the Appendix), respectively. These results prove that our proposed \taco~ framework achieves state-of-the-art OA and mAcc accuracies for both these datasets. Notably, \taco~ achieves $1.68\%$ higher OA than RotationNet, the current highest OA-achieving method on the ModelNet40 dataset. Further, \taco~ comprehensively outperformed the recent transformer-based models, e.g., PointMamba~\cite{liang2024pointmamba} and PointGPT~\cite{chen2023pointgpt}, among others. Our macro-averaged precision-recall curve (Fig. \ref{fig:MN40_length_pr}(b)) stays tightly clustered near $(1,1)$, showcasing near-perfect precision and recall across every class. This level of consistency, even on minority classes, sets a new bar for robust, balanced multi-class performance. Fig. \ref{fig:MN40_cm} (refer to Appendix) shows the confusion matrix found with the best saved model.

Similarly, \taco~ outperforms RotationNet in OA on the ModelNet10 dataset. Given the lower number of classes available, it was expected that the proposed \taco~ framework would achieve higher accuracies in ModelNet10 than in ModelNet40. Not only was \taco~ successful in meeting that expectation, it yielded $99.52\%$ OA and mAcc values. Most importantly, to the best of our knowledge, ours is the first approach to push the classification accuracy beyond $99\%$ on both ModelNet40 and ModelNet10. Altogether, these results make this research work groundbreaking.

\noindent
\textbf{Real-world Datasets. }
To further validate real-world applicability, we evaluate \taco~ on OmniObject3D~\cite{wu2023omniobject3d}, a challenging benchmark featuring thousands of everyday objects captured under realistic conditions. Despite its complexity and large class diversity, \taco~ delivers the highest accuracy of $58.90\%$, decisively outperforming heavyweight baselines including CurveNet, PointNet, PointNet++, and PCT: Point cloud transformer (see Table~\ref{tab:omniobject3d}). We next evaluate \taco~ on ScanObjectNN (OBJ\_BG variant), a notoriously challenging real-world benchmark characterized by heavy occlusions and background clutter-conditions where many point-based methods struggle~\cite{uy2019revisiting}. While not setting a new record, \taco~ achieves an impressive $93.94\%$ overall accuracy, surpassing widely adopted baselines such as PointNet~\cite{qi2017pointnet} (73.3), SpiderCNN~\cite{xu2018spidercnn} (77.1), PointNet++~\cite{qi2017pointnet++} (82.3), DGCNN~\cite{wang2019dynamic} (82.8), and even edging out advanced models like GDANet~\cite{xu2021learning} (87.0), PointBERT~\cite{yu2022point} (87.43), and PointGPT~\cite{chen2023pointgpt} (93.39). This result underscores \taco~'s ability to remain highly competitive against transformer and graph-based architectures in highly cluttered, real-world scenarios. 

\noindent
\textbf{Real-world Medical Data. }For VesselMNIST~\cite{yang2020intra}, we compared against some of the current SOTA benchmarks for this dataset as shown in Table \ref{tab:vessel3d}. When compared against the benchmarks presented in \cite{yang2020intra}, \taco~ performed better in terms of both F1 and mAcc. For example, the previous highest F1-score of $0.90$ for VesselMNIST was achieved by PointNet++ and PointCNN, whereas our mean F1-score is $0.94$ - an improvement of $4.44\%$. Similarly, for the mAcc metric, our average result is $95.28\%$, whereas the prior best was $93.52$ achieved by PointNet++ - an improvement of $1.88\%$.

For the AdrenalMNIST dataset, we used the benchmark provided in \cite{yang2023medmnist} as our baseline. The comparison results are presented in Table \ref{tab:adrenal}. The authors in \cite{yang2023medmnist} have used different variations of ResNet along with medical image-specific variants such as ACS~\cite{yang2021reinventing}. Our proposed \taco~ outperformed all these benchmarks in the OA metric, as shown in the table. Notably, for the VesselMNIST3D dataset, the difference in OA between ours and the current SOTA achieved using ResNet-18 + ACS is $4.58\%$.

\textbf{Resiliency Against Corrupted Test Data.} Noise resiliency in ~\taco~ is achieved because topological features, extracted via persistent homology, capture the global shape characteristics of objects rather than relying on exact point positions. These features remain stable under small perturbations or noise, as persistent homology emphasizes long-lived topological structures while ignoring short-lived, noise-induced artifacts. Consequently, ~\taco~ maintains high accuracy in most cases even when point clouds are corrupted, as discussed below. To test the robustness of \taco~, we have used the standardized approach of ModelNet40-C~\cite{sun2022benchmarking}, where the test set of ModelNet40 is perturbed by adding various common types of corruptions. Note that the `uniform downsampling' perturbation was not part of ModelNet40-C.

Two main differences between our implementation and ModelNet40-C are 1) we use $2048 \times 3$ size point clouds, and 2) we do not apply any normalization after the perturbation is incorporated. The results for this study are presented in Table \ref{tab:modelnet-C}. 
The first row presents the test results found by the model with the highest test accuracy for clean ModelNet40 without any corruption. We use this saved model for inference on the corrupted test dataset and report the results in Table \ref{tab:modelnet-C}. Two severity levels of data corruption are chosen from \cite{sun2022benchmarking}: 1 and 5, named Low and High, respectively, in our paper. For `uniform downsampling', we have removed $10$ and $30\%$ random points uniformly for low and high severity, respectively.

We see that \taco~ is very robust against low-level perturbations - always achieves $\ge 94\%$ OA except for `impulse', where the OA falls to $52.88\%$. As expected with high-level perturbations, \taco~ shows resiliency. In case of `impulse', the OA more than halves, but for the others, it performs reasonably well - the average OA being $68.65\%$. If the underlying shape changes significantly, then topological features become very different from clean class signatures, and \taco~ struggles to recognize corrupted point clouds. Under the high‑severity corruption, augmenting the training set with corrupted copies of randomly selected 20\% of training instances lifts overall accuracy to 96.11\% in the case of rotation, for example, substantially outperforming the non‑augmented model (49.39\%).
This demonstrates that a task‑aligned augmentation confers significant robustness to extreme pose variations without altering the core architecture.

\noindent
\textbf{Shape Retrieval. }Our method achieves a new state-of-the-art retrieval performance with an mAP of $99.33$ on ModelNet40, surpassing the strongest baseline Latformer ($97.4$) and all prior approaches (see summary results in Table \ref{tab:modelnet40_retrieval} in the Appendix). Beyond accuracy, the framework demonstrates strong generalizability, showing its effectiveness not only for classification but also for challenging tasks such as shape retrieval.

\noindent
\textbf{Note on efficiency. }With voxel size $\rho=0.05$ (our optimal configuration), the throughput of the TDA feature generation pipeline is $5.3$ point clouds/second, when the feature vector length was set to $1728$ for the ModelNet40 dataset. This number increased to $8.2$ and decreased to $1.4$ when $\rho$ was increased to $0.07$ or decreased to $0.03$, respectively. However, as mentioned earlier, the test accuracy decreases in both cases. However, on the bright side, due to the lightweight 1D CNN of \taco~, the training time is short ($2.50$ mins.) and class prediction is lightning-fast -- achieving a throughput rate of $16,454$ point clouds/second. Furthermore, Table \ref{tab:params_only} demonstrates that our proposed \taco~model achieves state-of-the-art accuracy with the very few parameters (0.72M), highlighting its superior efficiency compared to prior methods.

%%%%%%%%%% ablation study %%%%%%%%%
\begin{wraptable}{r}{0.4\textwidth}
\vspace{-14pt}

    \centering
    \caption{Ablation study on ModelNet40.}
    \addtolength{\tabcolsep}{-0.4em}

    \begin{tabular}{p{0.25cm}p{2.5cm}cc}
    \toprule
    &Variant & OA  & mAcc \\
    \midrule
    \multirow{4}{4em}{\rotatebox{90}{\underline{features}}} & DEDS only  & 96.52 & 93.76\\
       & H only  & 98.29 & 96.38\\
       & DEDS + H only  & 98.82 & 97.33\\
       & Entropy only & 96.16 & 92.79\\
       \midrule
       \multirow{2}{*}{\rotatebox{90}{\underline{net}}} & First two Conv1D & 98.18 & 95.93\\
       & First Conv1D &94.76 & 90.70\\
    \bottomrule
    \end{tabular}
    \label{tab:ablation}
    \vspace{-10pt}
\end{wraptable}

\textbf{Ablation Study. }
Our approach to studying the effect of ablation is two-fold. First, using our proposed network (Fig. \ref{fig:nn_architecture}), we test different sets of topological features extracted from a point cloud, i.e., using density, erosion, dilation, and signed distance filtration (DEDS) only, height (H) filtration only, and finally DEDS + H only (i.e., without any radial filtration). The effect of using different radial filtration along with DEDS and H together is already illustrated in Fig. \ref{fig:MN40_length_pr}(a) and discussed earlier. In the next set of ablation studies, we use the best topological features for ModelNet40, i.e., a vector length of $1728$, while testing the effect of deleting one CNN layer at a time. The results are listed in Table \ref{tab:ablation}. This study demonstrates that the features extracted via the H filtration are the most effective, yielding over 
98\% OA, which outperforms all baselines while being 3.5x faster than the whole pipeline. Similarly, when just entropy is used (without amplitude), vector generation is 2.2x faster while maintaining $96.16\%$ OA.
This shows TACO-Net is not only accurate but also tunable for resource-constrained scenarios. The finding further supports our rationale for employing 
26 directional vectors, as outlined in Sec.~\ref{sec:tda}. Incorporating the DEDS features provides a slight improvement, but the gain is marginal. In contrast, removing two CNN layers leads to a moderate decrease in 
4.29\% in accuracy. To highlight the effectiveness of our 1D CNN for feature vector classification, we replace it with a heavier 2.2M-parameter transformer that incorporates feature and positional embeddings, mixed self-attention, and a fully connected classifier. Despite its complexity, this transformer yields only $62.20\%$ accuracy on ModelNet40, underscoring the superiority of our lightweight CNN design. On the other hand, XGBoost, a non-deep learning method also yielded substantially lower OA of 81.35\% (see Table~\ref{tab:xgboost_comp}). 
Taken together, these results provide strong justification for our feature selection and network design choices.

\section{Conclusion and Future Work}

3D object classification is an important task for autonomous systems. Furthermore, such classification can become standard in automated diagnosis with 3D medical imaging. Computer vision researchers have made significant advancements in this topic using various deep learning techniques in recent years. However, there are still some challenges to address and open directions to explore. To this end, we have proposed a novel framework, named \taco~, for 3D object classification. Our proposed approach takes a point cloud of the object as input, converts it into a voxelized 3D binary image, extracts topological signatures from it through various filtration techniques, and finally learns these features using a lightweight 1D CNN. Results show that our proposed technique achieves near-perfect overall accuracy in popular 3D object classification benchmark datasets, namely ModelNet40 and ModelNet10, while outperforming the current SOTA for these. Further, when tested on two 3D medical datasets consisting of brain MRA and abdominal CT scan data, \taco~, outperforms all the benchmarks provided in the literature, showcasing its strong generalizable qualities. To enhance real-time performance, future work will focus on exploiting GPU parallelism to increase throughput for the topological feature generation pipeline.

\bibliographystyle{acm}
\bibliography{refs}

\begin{thebibliography}{10}

\bibitem{akcora2020bitcoinheist}
{\sc Akcora, C.~G., Li, Y., Gel, Y.~R., and Kantarcioglu, M.}
\newblock Bitcoinheist: Topological data analysis for ransomware prediction on
  the bitcoin blockchain.
\newblock In {\em Proceedings of the Twenty-Ninth International Joint
  Conference on Artificial Intelligence\/} (2020).

\bibitem{ben20183dmfv}
{\sc Ben-Shabat, Y., Lindenbaum, M., and Fischer, A.}
\newblock 3dmfv: Three-dimensional point cloud classification in real-time
  using convolutional neural networks.
\newblock {\em IEEE Robotics and Automation Letters 3}, 4 (2018), 3145--3152.

\bibitem{brock2016generative}
{\sc Brock, A., Lim, T., Ritchie, J.~M., and Weston, N.}
\newblock Generative and discriminative voxel modeling with convolutional
  neural networks.
\newblock {\em arXiv preprint arXiv:1608.04236\/} (2016).

\bibitem{bubenik2017persistence}
{\sc Bubenik, P., and D{\l}otko, P.}
\newblock A persistence landscapes toolbox for topological statistics.
\newblock {\em Journal of Symbolic Computation 78\/} (2017), 91--114.

\bibitem{bubenik2015statistical}
{\sc Bubenik, P., et~al.}
\newblock Statistical topological data analysis using persistence landscapes.
\newblock {\em J. Mach. Learn. Res. 16}, 1 (2015), 77--102.

\bibitem{bukkuri2021applications}
{\sc Bukkuri, A., Andor, N., and Darcy, I.~K.}
\newblock Applications of topological data analysis in oncology.
\newblock {\em Frontiers in artificial intelligence 4\/} (2021), 659037.

\bibitem{chazal2021introduction}
{\sc Chazal, F., and Michel, B.}
\newblock An introduction to topological data analysis: fundamental and
  practical aspects for data scientists.
\newblock {\em Frontiers in artificial intelligence 4\/} (2021), 667963.

\bibitem{chen2023pointgpt}
{\sc Chen, G., Wang, M., Yang, Y., Yu, K., Yuan, L., and Yue, Y.}
\newblock Pointgpt: Auto-regressively generative pre-training from point
  clouds.
\newblock {\em Advances in Neural Information Processing Systems 36\/} (2023),
  29667--29679.

\bibitem{chen2016xgboost}
{\sc Chen, T., and Guestrin, C.}
\newblock Xgboost: A scalable tree boosting system.
\newblock In {\em Proceedings of the 22nd acm sigkdd international conference
  on knowledge discovery and data mining\/} (2016), pp.~785--794.

\bibitem{chintakunta2015entropy}
{\sc Chintakunta, H., Gentimis, T., Gonzalez-Diaz, R., Jimenez, M.-J., and
  Krim, H.}
\newblock An entropy-based persistence barcode.
\newblock {\em Pattern Recognition 48}, 2 (2015), 391--401.

\bibitem{edelsbrunner2002topological}
{\sc Edelsbrunner, Letscher, and Zomorodian}.
\newblock Topological persistence and simplification.
\newblock {\em Discrete \& computational geometry 28\/} (2002), 511--533.

\bibitem{fabbri20082d}
{\sc Fabbri, R., Costa, L. D.~F., Torelli, J.~C., and Bruno, O.~M.}
\newblock 2d euclidean distance transform algorithms: A comparative survey.
\newblock {\em ACM Computing Surveys (CSUR) 40}, 1 (2008), 1--44.

\bibitem{feng2019hypergraph}
{\sc Feng, Y., You, H., Zhang, Z., Ji, R., and Gao, Y.}
\newblock Hypergraph neural networks.
\newblock In {\em Proceedings of the AAAI conference on artificial
  intelligence\/} (2019), vol.~33, pp.~3558--3565.

\bibitem{feng2018gvcnn}
{\sc Feng, Y., Zhang, Z., Zhao, X., Ji, R., and Gao, Y.}
\newblock Gvcnn: Group-view convolutional neural networks for 3d shape
  recognition.
\newblock In {\em Proceedings of the IEEE conference on computer vision and
  pattern recognition\/} (2018), pp.~264--272.

\bibitem{feurer2015efficient}
{\sc Feurer, M., Klein, A., Eggensperger, K., Springenberg, J., Blum, M., and
  Hutter, F.}
\newblock Efficient and robust automated machine learning.
\newblock {\em Advances in neural information processing systems 28\/} (2015).

\bibitem{garin2019topological}
{\sc Garin, A., and Tauzin, G.}
\newblock A topological "reading" lesson: Classification of mnist using tda.
\newblock In {\em 2019 18th IEEE international conference on machine learning
  and applications (ICMLA)\/} (2019), IEEE, pp.~1551--1556.

\bibitem{goyal2021revisiting}
{\sc Goyal, A., Law, H., Liu, B., Newell, A., and Deng, J.}
\newblock Revisiting point cloud shape classification with a simple and
  effective baseline.
\newblock In {\em International conference on machine learning\/} (2021), PMLR,
  pp.~3809--3820.

\bibitem{guo2021pct}
{\sc Guo, M.-H., Cai, J.-X., Liu, Z.-N., Mu, T.-J., Martin, R.~R., and Hu,
  S.-M.}
\newblock Pct: Point cloud transformer.
\newblock {\em Computational visual media 7}, 2 (2021), 187--199.

\bibitem{hamdi2021mvtn}
{\sc Hamdi, A., Giancola, S., and Ghanem, B.}
\newblock Mvtn: Multi-view transformation network for 3d shape recognition.
\newblock In {\em Proceedings of the IEEE/CVF international conference on
  computer vision\/} (2021), pp.~1--11.

\bibitem{he2024latformer}
{\sc He, X., Cheng, S., Liang, D., Bai, S., Wang, X., and Zhu, Y.}
\newblock Latformer: locality-aware point-view fusion transformer for 3d shape
  recognition.
\newblock {\em Pattern Recognition 151\/} (2024), 110413.

\bibitem{jin2019auto}
{\sc Jin, H., Song, Q., and Hu, X.}
\newblock Auto-keras: An efficient neural architecture search system.
\newblock In {\em Proceedings of the 25th ACM SIGKDD international conference
  on knowledge discovery \& data mining\/} (2019), pp.~1946--1956.

\bibitem{kaczynski2006computational}
{\sc Kaczynski, T., Mischaikow, K., and Mrozek, M.}
\newblock {\em Computational homology}, vol.~157.
\newblock Springer Science \& Business Media, 2006.

\bibitem{kanezaki2019rotationnet}
{\sc Kanezaki, A., Matsushita, Y., and Nishida, Y.}
\newblock Rotationnet for joint object categorization and unsupervised pose
  estimation from multi-view images.
\newblock {\em IEEE transactions on pattern analysis and machine intelligence
  43}, 1 (2019), 269--283.

\bibitem{khan2019unsupervised}
{\sc Khan, S.~H., Guo, Y., Hayat, M., and Barnes, N.}
\newblock Unsupervised primitive discovery for improved 3d generative modeling.
\newblock In {\em Proceedings of the IEEE/CVF Conference on Computer Vision and
  Pattern Recognition\/} (2019), pp.~9739--9748.

\bibitem{kiranyaz20211d}
{\sc Kiranyaz, S., Avci, O., Abdeljaber, O., Ince, T., Gabbouj, M., and Inman,
  D.~J.}
\newblock 1d convolutional neural networks and applications: A survey.
\newblock {\em Mechanical systems and signal processing 151\/} (2021), 107398.

\bibitem{klokov2017escape}
{\sc Klokov, R., and Lempitsky, V.}
\newblock Escape from cells: Deep kd-networks for the recognition of 3d point
  cloud models.
\newblock In {\em Proceedings of the IEEE international conference on computer
  vision\/} (2017), pp.~863--872.

\bibitem{komarichev2019cnn}
{\sc Komarichev, A., Zhong, Z., and Hua, J.}
\newblock A-cnn: Annularly convolutional neural networks on point clouds.
\newblock In {\em Proceedings of the IEEE/CVF conference on computer vision and
  pattern recognition\/} (2019), pp.~7421--7430.

\bibitem{li2018so}
{\sc Li, J., Chen, B.~M., and Lee, G.~H.}
\newblock So-net: Self-organizing network for point cloud analysis.
\newblock In {\em Proceedings of the IEEE conference on computer vision and
  pattern recognition\/} (2018), pp.~9397--9406.

\bibitem{li2018pointcnn}
{\sc Li, Y., Bu, R., Sun, M., Wu, W., Di, X., and Chen, B.}
\newblock Pointcnn: Convolution on x-transformed points.
\newblock {\em Advances in neural information processing systems 31\/} (2018).

\bibitem{liang2024pointmamba}
{\sc Liang, D., Zhou, X., Xu, W., Zhu, X., Zou, Z., Ye, X., Tan, X., and Bai,
  X.}
\newblock Pointmamba: A simple state space model for point cloud analysis.
\newblock {\em Advances in neural information processing systems 37\/} (2024),
  32653--32677.

\bibitem{liu2019point2sequence}
{\sc Liu, X., Han, Z., Liu, Y.-S., and Zwicker, M.}
\newblock Point2sequence: Learning the shape representation of 3d point clouds
  with an attention-based sequence to sequence network.
\newblock In {\em Proceedings of the AAAI conference on artificial
  intelligence\/} (2019), vol.~33, pp.~8778--8785.

\bibitem{liu2019densepoint}
{\sc Liu, Y., Fan, B., Meng, G., Lu, J., Xiang, S., and Pan, C.}
\newblock Densepoint: Learning densely contextual representation for efficient
  point cloud processing.
\newblock In {\em Proceedings of the IEEE/CVF international conference on
  computer vision\/} (2019), pp.~5239--5248.

\bibitem{liu2019relation}
{\sc Liu, Y., Fan, B., Xiang, S., and Pan, C.}
\newblock Relation-shape convolutional neural network for point cloud analysis.
\newblock In {\em Proceedings of the IEEE/CVF conference on computer vision and
  pattern recognition\/} (2019), pp.~8895--8904.

\bibitem{ma2018learning}
{\sc Ma, C., Guo, Y., Yang, J., and An, W.}
\newblock Learning multi-view representation with lstm for 3-d shape
  recognition and retrieval.
\newblock {\em IEEE Transactions on Multimedia 21}, 5 (2018), 1169--1182.

\bibitem{ma2022rethinking}
{\sc Ma, X., Qin, C., You, H., Ran, H., and Fu, Y.}
\newblock Rethinking network design and local geometry in point cloud: A simple
  residual mlp framework.
\newblock {\em arXiv preprint arXiv:2202.07123\/} (2022).

\bibitem{maturana2015voxnet}
{\sc Maturana, D., and Scherer, S.}
\newblock Voxnet: A 3d convolutional neural network for real-time object
  recognition.
\newblock In {\em 2015 IEEE/RSJ international conference on intelligent robots
  and systems (IROS)\/} (2015), IEEE, pp.~922--928.

\bibitem{mohammadi2021pointview}
{\sc Mohammadi, S.~S., Wang, Y., and Del~Bue, A.}
\newblock Pointview-gcn: 3d shape classification with multi-view point clouds.
\newblock In {\em 2021 IEEE International Conference on Image Processing
  (ICIP)\/} (2021), IEEE, pp.~3103--3107.

\bibitem{qi2017pointnet}
{\sc Qi, C.~R., Su, H., Mo, K., and Guibas, L.~J.}
\newblock Pointnet: Deep learning on point sets for 3d classification and
  segmentation.
\newblock In {\em Proceedings of the IEEE conference on computer vision and
  pattern recognition\/} (2017), pp.~652--660.

\bibitem{qi2017pointnet++}
{\sc Qi, C.~R., Yi, L., Su, H., and Guibas, L.~J.}
\newblock Pointnet++: Deep hierarchical feature learning on point sets in a
  metric space.
\newblock {\em Advances in neural information processing systems 30\/} (2017).

\bibitem{qian2022pointnext}
{\sc Qian, G., Li, Y., Peng, H., Mai, J., Hammoud, H., Elhoseiny, M., and
  Ghanem, B.}
\newblock Pointnext: Revisiting pointnet++ with improved training and scaling
  strategies.
\newblock {\em Advances in neural information processing systems 35\/} (2022),
  23192--23204.

\bibitem{reininghaus2015stable}
{\sc Reininghaus, J., Huber, S., Bauer, U., and Kwitt, R.}
\newblock A stable multi-scale kernel for topological machine learning.
\newblock In {\em Proceedings of the IEEE conference on computer vision and
  pattern recognition\/} (2015), pp.~4741--4748.

\bibitem{ren2022benchmarking}
{\sc Ren, J., Pan, L., and Liu, Z.}
\newblock Benchmarking and analyzing point cloud classification under
  corruptions.
\newblock In {\em International Conference on Machine Learning\/} (2022), PMLR,
  pp.~18559--18575.

\bibitem{sarker2024comprehensive}
{\sc Sarker, S., Sarker, P., Stone, G., Gorman, R., Tavakkoli, A., Bebis, G.,
  and Sattarvand, J.}
\newblock A comprehensive overview of deep learning techniques for 3d point
  cloud classification and semantic segmentation.
\newblock {\em Machine Vision and Applications 35}, 4 (2024), 67.

\bibitem{sedaghat2016orientation}
{\sc Sedaghat, N., Zolfaghari, M., Amiri, E., and Brox, T.}
\newblock Orientation-boosted voxel nets for 3d object recognition.
\newblock In {\em British Machine Vision Conference 2017, {BMVC} 2017, London,
  UK, September 4-7, 2017\/} (2017), {BMVA} Press.

\bibitem{sfikas2018ensemble}
{\sc Sfikas, K., Pratikakis, I., and Theoharis, T.}
\newblock Ensemble of panorama-based convolutional neural networks for 3d model
  classification and retrieval.
\newblock {\em Computers \& Graphics 71\/} (2018), 208--218.

\bibitem{singh2023topological}
{\sc Singh, Y., Farrelly, C.~M., Hathaway, Q.~A., Leiner, T., Jagtap, J.,
  Carlsson, G.~E., and Erickson, B.~J.}
\newblock Topological data analysis in medical imaging: current state of the
  art.
\newblock {\em Insights into Imaging 14}, 1 (2023), 58.

\bibitem{skaf2022topological}
{\sc Skaf, Y., and Laubenbacher, R.}
\newblock Topological data analysis in biomedicine: A review.
\newblock {\em Journal of Biomedical Informatics 130\/} (2022), 104082.

\bibitem{su2015multi}
{\sc Su, H., Maji, S., Kalogerakis, E., and Learned-Miller, E.}
\newblock Multi-view convolutional neural networks for 3d shape recognition.
\newblock In {\em Proceedings of the IEEE international conference on computer
  vision\/} (2015), pp.~945--953.

\bibitem{sun2022benchmarking}
{\sc Sun, J., Zhang, Q., Kailkhura, B., Yu, Z., Xiao, C., and Mao, Z.~M.}
\newblock Benchmarking robustness of 3d point cloud recognition against common
  corruptions.
\newblock {\em arXiv preprint arXiv:2201.12296\/} (2022).

\bibitem{tauzin2021giotto}
{\sc Tauzin, G., Lupo, U., Tunstall, L., P{\'e}rez, J.~B., Caorsi, M.,
  Medina-Mardones, A.~M., Dassatti, A., and Hess, K.}
\newblock giotto-tda:: A topological data analysis toolkit for machine learning
  and data exploration.
\newblock {\em Journal of Machine Learning Research 22}, 39 (2021), 1--6.

\bibitem{turner2014persistent}
{\sc Turner, K., Mukherjee, S., and Boyer, D.~M.}
\newblock Persistent homology transform for modeling shapes and surfaces.
\newblock {\em Information and Inference: A Journal of the IMA 3}, 4 (2014),
  310--344.

\bibitem{uy2019revisiting}
{\sc Uy, M.~A., Pham, Q.-H., Hua, B.-S., Nguyen, T., and Yeung, S.-K.}
\newblock Revisiting point cloud classification: A new benchmark dataset and
  classification model on real-world data.
\newblock In {\em Proceedings of the IEEE/CVF international conference on
  computer vision\/} (2019), pp.~1588--1597.

\bibitem{wagner2011efficient}
{\sc Wagner, H., Chen, C., and Vu{\c{c}}ini, E.}
\newblock Efficient computation of persistent homology for cubical data.
\newblock In {\em Topological methods in data analysis and visualization II:
  theory, algorithms, and applications}. Springer, 2011, pp.~91--106.

\bibitem{wang2023octformer}
{\sc Wang, P.-S.}
\newblock Octformer: Octree-based transformers for 3d point clouds.
\newblock {\em ACM Transactions on Graphics (TOG) 42}, 4 (2023), 1--11.

\bibitem{wang2019dynamic}
{\sc Wang, Y., Sun, Y., Liu, Z., Sarma, S.~E., Bronstein, M.~M., and Solomon,
  J.~M.}
\newblock Dynamic graph cnn for learning on point clouds.
\newblock {\em ACM Transactions on Graphics (tog) 38}, 5 (2019), 1--12.

\bibitem{wasserman2018topological}
{\sc Wasserman, L.}
\newblock Topological data analysis.
\newblock {\em Annual review of statistics and its application 5}, 2018 (2018),
  501--532.

\bibitem{wu2016learning}
{\sc Wu, J., Zhang, C., Xue, T., Freeman, B., and Tenenbaum, J.}
\newblock Learning a probabilistic latent space of object shapes via 3d
  generative-adversarial modeling.
\newblock {\em Advances in neural information processing systems 29\/} (2016).

\bibitem{wu2019point}
{\sc Wu, P., Chen, C., Yi, J., and Metaxas, D.}
\newblock Point cloud processing via recurrent set encoding.
\newblock In {\em Proceedings of the AAAI conference on artificial
  intelligence\/} (2019), vol.~33, pp.~5441--5449.

\bibitem{wu2023omniobject3d}
{\sc Wu, T., Zhang, J., Fu, X., Wang, Y., Ren, J., Pan, L., Wu, W., Yang, L.,
  Wang, J., Qian, C., et~al.}
\newblock Omniobject3d: Large-vocabulary 3d object dataset for realistic
  perception, reconstruction and generation.
\newblock In {\em Proceedings of the IEEE/CVF Conference on Computer Vision and
  Pattern Recognition\/} (2023), pp.~803--814.

\bibitem{wu20153d}
{\sc Wu, Z., Song, S., Khosla, A., Yu, F., Zhang, L., Tang, X., and Xiao, J.}
\newblock 3d shapenets: A deep representation for volumetric shapes.
\newblock In {\em Proceedings of the IEEE conference on computer vision and
  pattern recognition\/} (2015), pp.~1912--1920.

\bibitem{xiang2021walk}
{\sc Xiang, T., Zhang, C., Song, Y., Yu, J., and Cai, W.}
\newblock Walk in the cloud: Learning curves for point clouds shape analysis.
\newblock In {\em Proceedings of the IEEE/CVF international conference on
  computer vision\/} (2021), pp.~915--924.

\bibitem{xu2021learning}
{\sc Xu, M., Zhang, J., Zhou, Z., Xu, M., Qi, X., and Qiao, Y.}
\newblock Learning geometry-disentangled representation for complementary
  understanding of 3d object point cloud.
\newblock In {\em Proceedings of the AAAI conference on artificial
  intelligence\/} (2021), vol.~35, pp.~3056--3064.

\bibitem{xu2020grid}
{\sc Xu, Q., Sun, X., Wu, C.-Y., Wang, P., and Neumann, U.}
\newblock Grid-gcn for fast and scalable point cloud learning.
\newblock In {\em Proceedings of the IEEE/CVF conference on computer vision and
  pattern recognition\/} (2020), pp.~5661--5670.

\bibitem{xu2018spidercnn}
{\sc Xu, Y., Fan, T., Xu, M., Zeng, L., and Qiao, Y.}
\newblock Spidercnn: Deep learning on point sets with parameterized
  convolutional filters.
\newblock In {\em Proceedings of the European conference on computer vision
  (ECCV)\/} (2018), pp.~87--102.

\bibitem{yang2021reinventing}
{\sc Yang, J., Huang, X., He, Y., Xu, J., Yang, C., Xu, G., and Ni, B.}
\newblock Reinventing 2d convolutions for 3d images.
\newblock {\em IEEE Journal of Biomedical and Health Informatics 25}, 8 (2021),
  3009--3018.

\bibitem{yang2023medmnist}
{\sc Yang, J., Shi, R., Wei, D., Liu, Z., Zhao, L., Ke, B., Pfister, H., and
  Ni, B.}
\newblock Medmnist v2-a large-scale lightweight benchmark for 2d and 3d
  biomedical image classification.
\newblock {\em Scientific Data 10}, 1 (2023), 41.

\bibitem{yang2020intra}
{\sc Yang, X., Xia, D., Kin, T., and Igarashi, T.}
\newblock Intra: 3d intracranial aneurysm dataset for deep learning.
\newblock In {\em Proceedings of the IEEE/CVF conference on computer vision and
  pattern recognition\/} (2020), pp.~2656--2666.

\bibitem{you2018pvnet}
{\sc You, H., Feng, Y., Ji, R., and Gao, Y.}
\newblock Pvnet: A joint convolutional network of point cloud and multi-view
  for 3d shape recognition.
\newblock In {\em Proceedings of the 26th ACM international conference on
  Multimedia\/} (2018), pp.~1310--1318.

\bibitem{yu2018multi}
{\sc Yu, T., Meng, J., and Yuan, J.}
\newblock Multi-view harmonized bilinear network for 3d object recognition.
\newblock In {\em Proceedings of the IEEE conference on computer vision and
  pattern recognition\/} (2018), pp.~186--194.

\bibitem{yu2022point}
{\sc Yu, X., Tang, L., Rao, Y., Huang, T., Zhou, J., and Lu, J.}
\newblock Point-bert: Pre-training 3d point cloud transformers with masked
  point modeling.
\newblock In {\em Proceedings of the IEEE/CVF conference on computer vision and
  pattern recognition\/} (2022), pp.~19313--19322.

\bibitem{zhao2022rotation}
{\sc Zhao, C., Yang, J., Xiong, X., Zhu, A., Cao, Z., and Li, X.}
\newblock Rotation invariant point cloud analysis: Where local geometry meets
  global topology.
\newblock {\em Pattern Recognition 127\/} (2022), 108626.

\bibitem{zhao2021point}
{\sc Zhao, H., Jiang, L., Jia, J., Torr, P.~H., and Koltun, V.}
\newblock Point transformer.
\newblock In {\em Proceedings of the IEEE/CVF international conference on
  computer vision\/} (2021), pp.~16259--16268.

\bibitem{zomorodian2004computing}
{\sc Zomorodian, A., and Carlsson, G.}
\newblock Computing persistent homology.
\newblock {\em Discrete \& Computational Geometry 33}, 2 (2004), 249--274.

\end{thebibliography}

\newpage
\section*{Appendix}

\subsection{\taco~ Pipeline and Illustration of Cubical Persistence}

We present a diagram of the \taco~ pipeline in Fig.~\ref{fig:pipeline}.

\begin{figure}[h]
    \centering
    \includegraphics[width=1.0\linewidth]{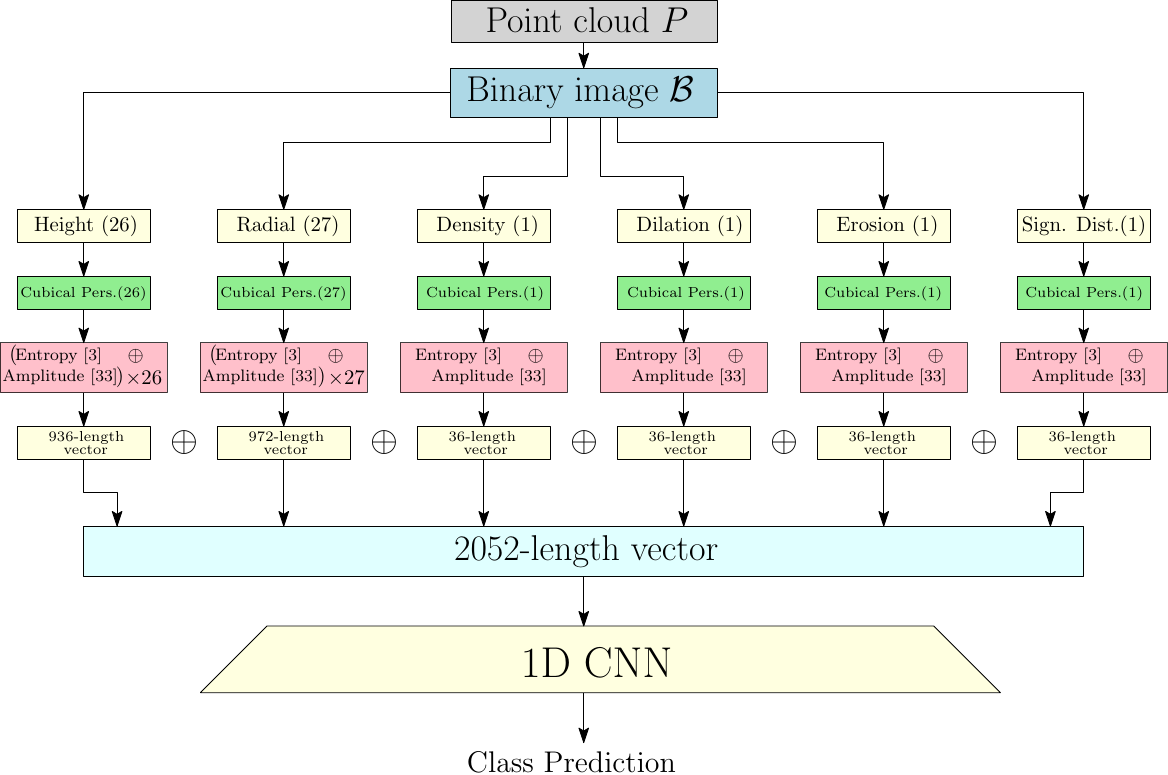}
    \caption{An illustration of the novel pipeline of \taco~. The numbers in the parentheses denote the number of variants. For instance, {`Height (26)'} implies that $26$ variants of the height filtration have been used. The integers inside the square braces denote vector length. For example, Entropy[3] implies that applying entropy to cubical persistence yields a vector of length 3.
    Further, $\oplus$ denotes vector concatenation. }
    \label{fig:pipeline}
\end{figure}

\noindent
\textbf{An illustrative example. }
We give an example of cubical persistence in Fig.~\ref{fig:example}(a-d). 

A 2D grayscale image with pixels having their grayscale values in $\{0,50,100\}$ is shown in (a). During filtration, $0$-pixels are considered first, then $50$-pixels, and finally $100$-pixels. In (b), $K_0$ is shown; the $0$-pixels are added, resulting in three connected components, each comprising just one pixel. (c) $K_1$:  The $50$-pixels are added. Consequently, there is just one connected component that looks like the digit $6$. In the previous step, there were three, but now just one. So, three connected components took birth at $0$, and two of them died at $50$. A hole takes birth inside $K_1$. (d) $K_2$: The $100$-pixels are added. The hole obtained in the previous step dies in this step. 

In the homology dimension $0$, there are two birth-death pairs $(0,50), (0,50)$ since two connected components died, and in dimension $1$, there is exactly one $(50,100)$ since the hole formed at $50$ and died at $100$. Hence, the persistence (a multiset) contains three pairs. This is expressed pictorially in Fig.~\ref{fig:pd}. In the end, there is just one connected component that never dies. 

\begin{figure}[h]
    \centering
    \includegraphics[width=1\linewidth]{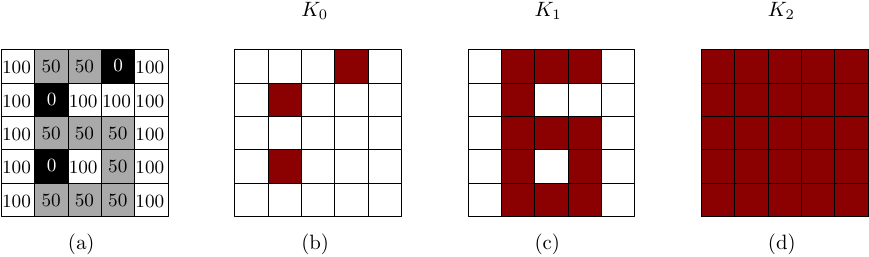}
    \caption{Illustrating filtration for cubical persistence (here shown in 2D). Note that $K_0 \subseteq K_1 \subseteq K_2$. }    
    \label{fig:example}
\end{figure}

\begin{figure}[h]
    \centering
    \includegraphics[width=0.5\linewidth]{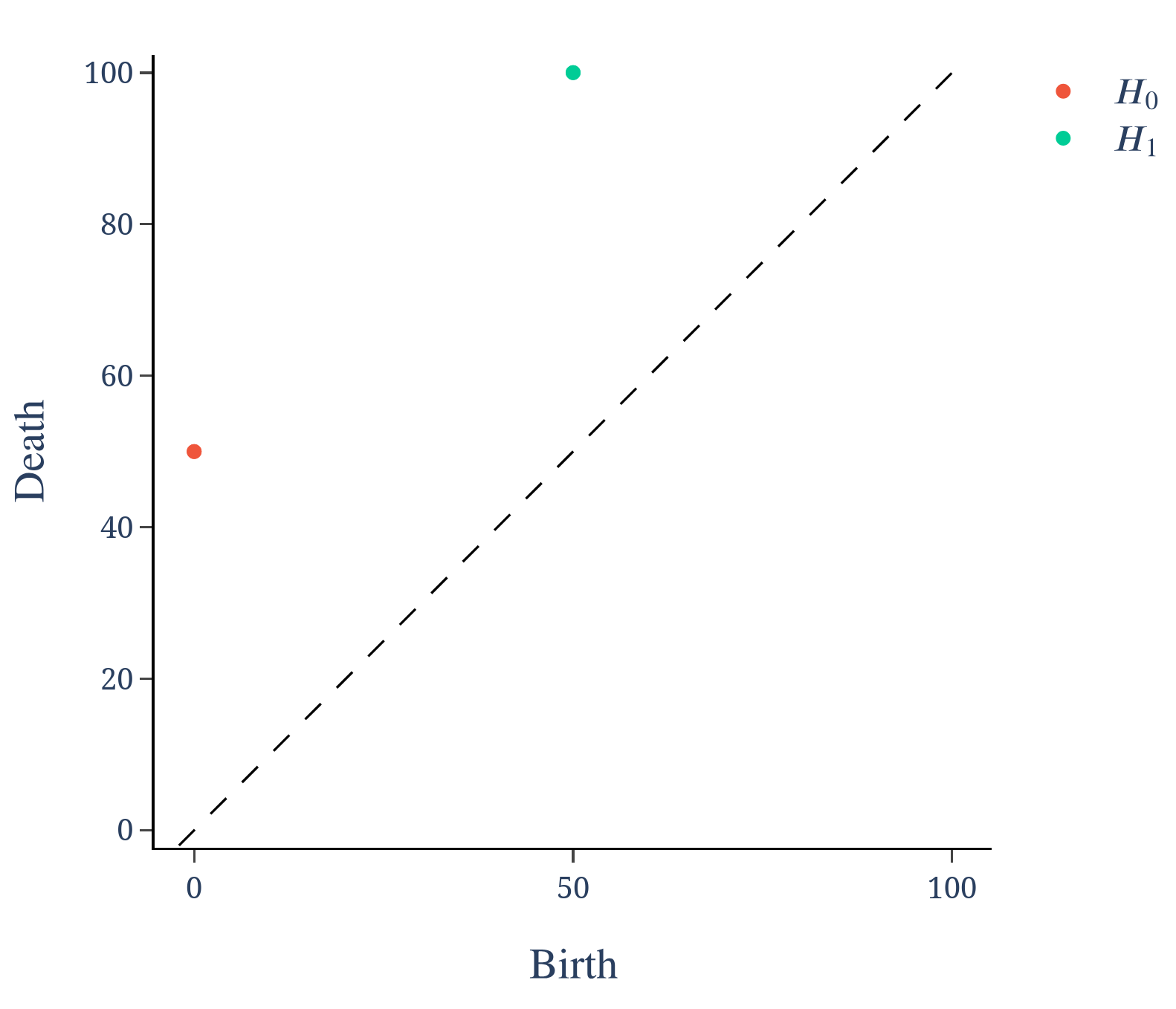}
    \caption{The persistence diagram corresponding to the filtration shown above. There are two overlapping red dots for the two birth-death pairs $(0,50), (0,50)$. The green dot corresponds to the pair $(50,100)$.}    
    \label{fig:pd}
\end{figure}

\noindent
\textbf{Rationale for the Amplitude Kinds. }

In what follows, we provide an expansion on the amplitude discussion presented in Sec.~\ref{sec:tda}, to provide insights on their use.

\textit{$p$-Wasserstein}~\cite{tauzin2021giotto}. The \textit{half-lifetime} of a pair $(b_i,d_i)\in X$ is defined as $\frac{d_i-b_i}{2}$. The Wasserstein amplitude of order $p$, denoted by $W(X,p)$, is defined as the $L_p$ norm of the vector of half-lifetimes of the birth-death pairs in $X$. Hence, $W(X,p) = ( \sum_{i} \left( \frac{d_i - b_i}{2}\right)^p )^{1/p}.$ For~\taco~, we have used $p=1,2$. We obtain $6$ real numbers for this metric, since there are $3$ homology groups and $2$ values of $p$. This metric aids in measuring the $L_p$ norm of half‑lifetimes, providing a stable (to diagram perturbations) and tunable sensitivity to feature persistence.
Using $p=1$ emphasizes the aggregate contribution of many moderate features, while 
$p=2$ weights longer lifetimes, more strongly complementary views that improve discrimination.

\textit{Bottleneck}~\cite{tauzin2021giotto}. The Bottleneck amplitude is denoted by  $B(X) = W(X,\infty)$. We obtain $3$ real numbers for this metric due to the three homology groups. This metric captures the single most persistent topological structure, which often aligns with the dominant, category‑defining shape cue.
Its robustness to small noise and invariance to minor diagram perturbations make it a strong separator when a few long‑lived features matter most.

\textit{Betti curve.~\cite{tauzin2021giotto}} The Betti curve of $X$ is the function $B_C:\mathbf{R} \rightarrow \mathbf{N}$, such that $B_C(s)$ gives the number of birth-pairs in $X$ that contains $s$ when every pair $(b_i,d_i)$ in $X$ is treated as an interval. Two amplitudes are obtained using the $L_1$ and $L_2$ norms. We obtain $6$ real numbers for this metric, since there are three homology groups and two norms. This metric helps summarize ``how many features are alive" across filtration values, yielding an interpretable 1D profile of topology over scale.
The norms over this curve provide compact, stable vectors that are efficient to learn with a 1D CNN while retaining multi‑scale counts.

\textit{Landscape}~\cite{bubenik2017persistence, bubenik2015statistical}. For a birth-death pair  $(b_i,d_i) \in X$, let $f_{(b_i,d_i)} : \mathbf{R} \rightarrow [0,\infty]$, be a piecewise linear function given in the following equation. %\revision{Anirban, check this text.} 
%\vspace{-5pt}

%\begin{figure} %{r}{0.5\textwidth}
\begin{equation*}
 f_{(b_i,d_i)} (x) =  \begin{cases}
0 & \text{if } x \notin (b_i,d_i) \\
x-b_i & \text{if } x \in (b_i,\frac{b_i+d_i}{2}] \\
-x+d_i & \text{if } x \in  (\frac{b_i+d_i}{2}, d_i)
\end{cases}
%\label{eq:landscape}
%\vspace{-2pt}
\end{equation*}
%\end{figure}
%

The \textit{persistence landscape} of $X$ is the sequence of functions $\lambda_k : \mathbf{R} \rightarrow [0, \infty], k =1, 2, 3, \ldots$ where $\lambda_k (x)$ is the $k$-th largest value of $\{f_{(b_i,d_i)}(x)\}_i$. Further, $\lambda_k(x)$ is set to $0$ if the $k$-th largest value does not exist. 
The parameter $k$ is called the \textit{layer}. For~\taco~, we have used $k=1,2$. Four amplitudes are obtained using $L_1$ and $L_2$ norms for both the values of $k$. We get $12$ real numbers for this metric, since there are three homology groups, two norms, and two distinct values of $k$.
This metric encodes order‑statistics of feature prominence via layers $\lambda_k$, preserving more geometric detail than simple counts yet remaining Hilbert‑space friendly for averaging and norms.
Using $k=1,2$ captures dominant and secondary structures, offering a stable, rich functional summary that boosts class separability.

\textit{Heat kernel}~\cite{reininghaus2015stable}. Gaussians of standard deviation $\sigma$ are placed over every point in $X$ and a negative Gaussian of $\sigma$ on the mirror point across the diagonal line in the persistence diagram. Thus, a real-valued function is obtained on $\mathbf{R}^2$. For \taco~, we have used $\sigma=0.15$. We get $6$ real numbers for this metric, since there are three homology groups and two norms, $L_1,L_2$. This metric places (positive) Gaussians on diagram points and (negative) mirrors across the diagonal, yielding a smooth, multi‑scale similarity that is robust to small birth/death shifts.
This continuous embedding captures spatial arrangement in the diagram and works well with standard norms; our 
$\sigma=0.15$ balances noise‑tolerance and sensitivity.

\subsection{Theoretical Analysis}

\begin{theorem}
Let $P$ be an $n$-element point cloud that needs to be classified by~\taco~. Then, the time taken for classification is $\mathcal{O}(n + v^3 + v/\rho^3 )$, where $v$ is the number of voxels in $\mathcal{B}$ and $\rho$ is the voxel-size used.
\end{theorem}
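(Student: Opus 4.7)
The plan is to decompose the \taco~ classification pipeline into its constituent computational stages, bound the running time of each in terms of $n$, $v$, and $\rho$, and then collect the dominant terms to recover the stated bound. Following Section~\ref{sec:tda}, the pipeline consists of (i) voxelization of $P$ into the 3D binary image $\mathcal{B}$, (ii) application of a constant number (namely $57$) of grayscale filtrations on $\mathcal{B}$, (iii) cubical persistence computation on each resulting grayscale image, (iv) vectorization of each persistence diagram via entropy and the five amplitudes, and (v) a single forward pass through the fixed 1D CNN of Fig.~\ref{fig:nn_architecture}.

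First, voxelization runs in $\mathcal{O}(n)$ time since each of the $n$ points in $P$ is deposited into its enclosing voxel by rounding its coordinates, in constant time per point. Among the six filtration types, the height, radial, dilation, erosion, and signed distance filtrations each assign a grayscale value to every voxel in amortized constant time: for height and radial one directly evaluates a distance to a fixed plane or point, while for dilation, erosion, and signed distance one uses a linear-time cubical distance transform, giving $\mathcal{O}(v)$ per filtration. The density filtration is the exception: for every voxel it must count active voxels within a ball of fixed real-world radius $r$, which in the voxelized grid contains $\Theta(1/\rho^3)$ voxels, so a single density pass costs $\mathcal{O}(v/\rho^3)$. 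Summing over the constant number of filtrations yields a combined filtration cost of $\mathcal{O}(v/\rho^3)$.

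The cubical persistence step dominates the topological work: for a cubical complex on $v$ voxels, the worst-case Edelsbrunner--Zomorodian matrix-reduction algorithm runs in $\mathcal{O}(v^3)$ time, and since the number of grayscale images is constant, the total persistence cost across all filtrations is $\mathcal{O}(v^3)$. Vectorization via persistent entropy and the five amplitudes traverses the $\mathcal{O}(v)$ birth-death pairs in each diagram, contributing only $\mathcal{O}(v)$. The 1D CNN has a fixed architecture and operates on a fixed-length feature vector, so inference takes $\mathcal{O}(1)$. Collecting terms yields $\mathcal{O}(n) + \mathcal{O}(v/\rho^3) + \mathcal{O}(v^3) + \mathcal{O}(v) + \mathcal{O}(1) = \mathcal{O}(n + v^3 + v/\rho^3)$, where the $\mathcal{O}(v)$ term is absorbed by $\mathcal{O}(v/\rho^3)$ for $\rho \le 1$.

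The main obstacle I anticipate is justifying the $v^3$ bound crisply: standard cubical persistence has worst-case cubic reduction cost, but modern implementations often do better in practice, and I will invoke the classical reduction result to avoid over-claiming. A secondary subtlety is carefully arguing that the density ball of fixed radius $r$ in real coordinates truly encloses $\Theta(1/\rho^3)$ voxels independent of $v$, so that the density filtration contributes $\mathcal{O}(v/\rho^3)$ rather than $\mathcal{O}(v)$ alone; this relies on a straightforward volumetric counting argument that I will make precise in the write-up.
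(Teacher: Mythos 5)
Your proposal is correct and follows essentially the same stage-by-stage decomposition as the paper's proof: voxelization in $\mathcal{O}(n)$, filtrations in $\mathcal{O}(v(1+1/\rho^3))$ with the density pass supplying the $v/\rho^3$ term, matrix-reduction cubical persistence in $\mathcal{O}(v^3)$, linear-size diagrams from the $\mathcal{O}(v)$ cell bound, and constant-time CNN inference. The only (immaterial) deviations are that the paper also charges $\mathcal{O}(v)$ to initialize the voxel grid and bounds the landscape vectorization at $\mathcal{O}(v\log v)$ rather than $\mathcal{O}(v)$, both of which are absorbed by the dominant terms.
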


\begin{proof} 
Initializing all voxels in \( \mathcal{B} \) as inactive requires \( \mathcal{O}(v) \) time. For each point in $P$, locating the corresponding voxel in \( \mathcal{B} \) takes \( \mathcal{O}(1) \) time. Since \( |P| = n \), the total time to prepare \( \mathcal{B} \) is \( \mathcal{O}(v + n) \).

From \( \mathcal{B} \), we generate 57 grayscale images using six filtration types: height, radial, density, dilation, erosion, and signed distance. For the 26 height and 27 radial filtrations, each voxel requires a constant-time distance computation, resulting in \( \mathcal{O}(v) \) time per filtration. Thus, the total time for generating these 53 grayscale images is \( \mathcal{O}(57 \cdot v) = \mathcal{O}(v) \), including initialization. For the density filtration, each voxel must inspect its neighborhood within a ball of radius \( r \), which contains \( \mathcal{O}(r^3/\rho^3) \) voxels. In \taco~, we set \( r = 1 \), yielding a per-voxel cost of \( \mathcal{O}(1/\rho^3) \), and a total cost of \( \mathcal{O}(v/\rho^3) \). The remaining three filtrations, dilation, erosion, and signed distance, can be computed in \( \mathcal{O}(v) \) time each using efficient distance transform algorithms~\cite{fabbri20082d}. Therefore, the total time for generating all 57 grayscale images is \( \mathcal{O}(v(1 + 1/\rho^3)) \).

Cubical persistence for a single grayscale image can be computed in \( \mathcal{O}(v^3) \) time using standard matrix reduction techniques~\cite{wagner2011efficient}. For 57 images, the total cost is \( \mathcal{O}(57 \cdot v^3) = \mathcal{O}(v^3) \). Each voxel generates up to 27 cells (1 cube, 6 faces, 12 edges, 8 vertices), the building blocks of a cubical complex. Each cell can belong to at most one persistence pair.
Hence, the worst-case number of birth–death pairs is at most $27v=\mathcal{O}(v)$.

Feature extraction from each cubical persistence involves computing persistent entropy and amplitude metrics. Persistent entropy requires \( \mathcal{O}(v) \) time per image. Wasserstein and Bottleneck amplitudes also take \( \mathcal{O}(v) \) time. The Betti curve kernel, evaluated over 100 filtration values, requires \( \mathcal{O}(v) \) time per image. The persistence landscape kernel, which involves sorting at each of 100 sampled values, incurs \( \mathcal{O}(v \log v) \) time. The heat kernel, evaluated over 20 filtration values, takes \( \mathcal{O}(v) \) time. Thus, the total time to generate the topological features for one grayscale image is \( \mathcal{O}(v \log v) \). For the $57$ images, time taken is $\mathcal{O}(57\cdot v \log v) = \mathcal{O}(v\log v)$.

Since the 1D CNN model is fixed during inference, classification of the topological vector takes constant time, i.e., \( \mathcal{O}(1) \).

Combining all components, the overall time complexity of the pipeline is:\\
$$\mathcal{O}(v + n) + \mathcal{O}\left(v\left(1 + \frac{1}{\rho^3}\right)\right) + \mathcal{O}(v^3) + \mathcal{O}(v \log v) + \mathcal{O}(1) = \mathcal{O}\left(n + v^3 + \frac{v}{\rho^3} \right).
$$
\end{proof}

\subsection{Further Experimental Details and Results}

\subsubsection{Datasets}
\label{sec:datasets}
We have used the following six datasets to test the performance of the proposed \taco~ framework.
\vspace{-5pt}
\begin{itemize}[leftmargin=*]%\itemsep0pt
    \item ModelNet40~\cite{wu20153d}: It is one of the most popular benchmark datasets. The dataset comprises 40 classes, each consisting of CAD models of everyday objects. We used the official split, which consisted of 9,843 shapes for training and 2,468 for testing. For every shape, a random uniform sample of 2048 3D points was extracted from these CAD objects for the classification task.  
    \item ModelNet10: A smaller, 10-class version of ModelNet40~\cite{wu20153d} with $3991$ train and $908$ test samples. Similar to ModelNet40, a $2048$-element point cloud for each object was used in our experiments.
    \item OmniObject3D: A real-world point cloud object dataset, which is notoriously difficult to classify~\cite{wu2023omniobject3d}. With a large number of categories, it poses an extreme class-imbalance and inter-class similarity challenge, making accurate classification significantly harder compared to smaller-scale benchmarks. Unlike ModelNet10/40, there is no official train/test split available for this dataset. Therefore, we used an $80/20$ split, without instance leakage. %The numbers of train and test samples are $4641$ and $1270$, respectively, without any instance leakage. 
    \item ScanObjectNN: The challenging real-world OBJ\_BG variant is derived from scanned indoor scenes, comprising ($2309$ train and $581$ test) partial and noisy point clouds with backgrounds across $15$ object classes, often with multiple objects co-existing in cluttered environments~\cite{uy2019revisiting}.For this, the voxel size $\rho=0.03$. The vector length is set to 1440, the visual reasoning for which is presented in Fig. \ref{fig:Sonn_lvsacc}.
    \item VesselMNIST3D: In \cite{yang2020intra}, the authors have introduced an open-access 3D intracranial aneurysm dataset with $103$ 3D meshes from brain Magnetic Resonance Angiography (MRA). This dataset has two classes: $1,694$ healthy vessel segments (V.) and $215$ aneurysm segments (A.). The dataset has the training, validation, and test set ratio of $7:1:2$~\cite{yang2023medmnist}. 

    \item AdrenalMNIST3D: It is a CT scan dataset with two classes, consisting of shape masks from $1,584$ left and right adrenal glands (i.e., $792$ patients)~\cite{yang2023medmnist}. The binarized images for the two medical datasets are provided through the \texttt{medmnist} Python package and have a resolution of $28 \times 28 \times 28$. Therefore, our starting point is 3D binary images instead of point clouds for AdrenalMNIST3D and VesselMNIST3D.
\end{itemize}

%%%%%%%% vector length and PR curve %%%%%%
\begin{figure}[ht!]
%\vspace{-10pt}
  \centering
  \begin{tabular}{cc}
      \hspace{-0.1in}\includegraphics[scale=0.5]{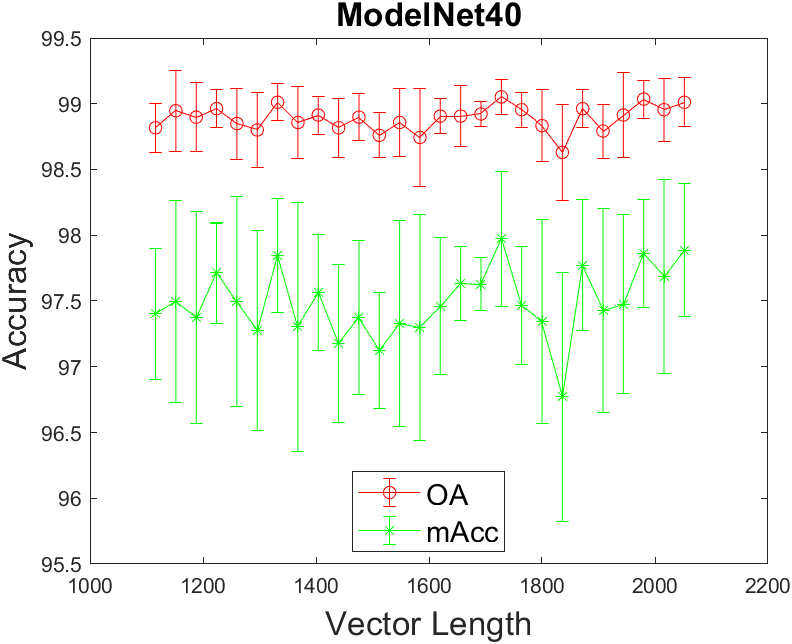} & \hspace{-0.2in}\includegraphics[scale=0.35]{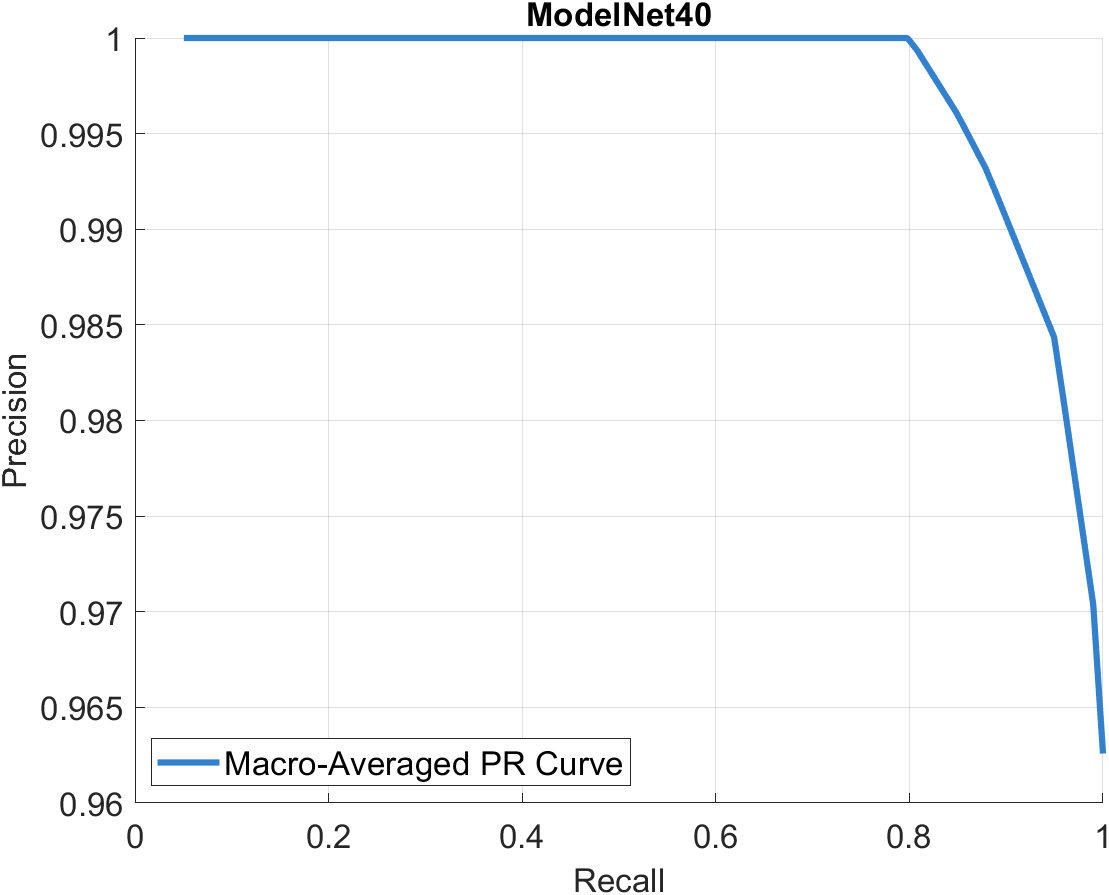} \\
      (a) & (b)
  \end{tabular}
  \caption{ModelNet40: (a) Change in OA/mAcc w.r.t. feature vector length (i.e., different number of radial filtrations) and (b) Precision-recall curve for the best OA model.}
  %\vspace{-10pt}
  \label{fig:MN40_length_pr}
\end{figure}

%%%%%%% confusion matrix
\begin{figure}[ht!] %{R}{0.5\textwidth}
%\vspace{-10pt}
  \centering
  \includegraphics[width=1.0\textwidth]{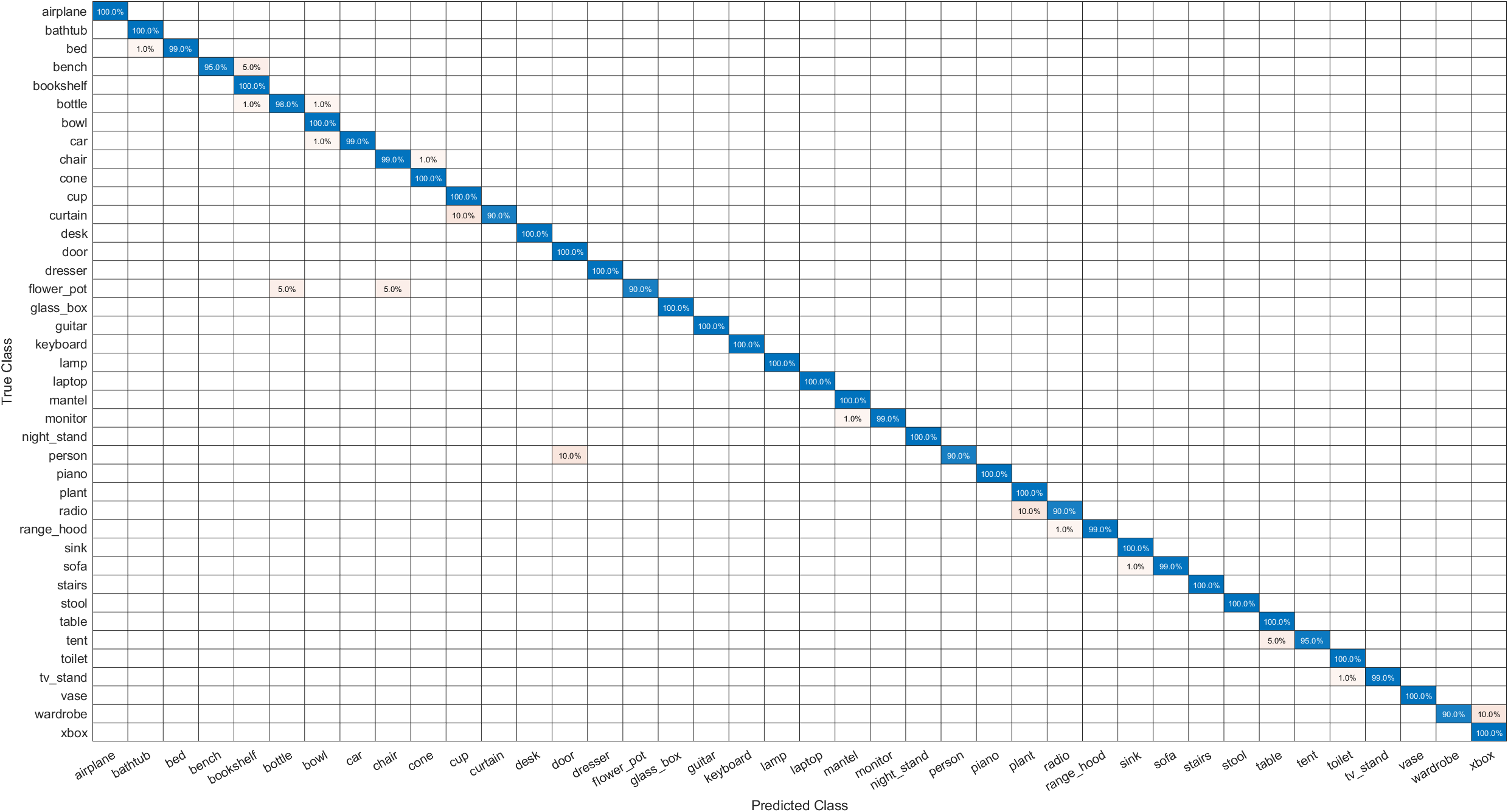} 
  \caption{Confusion matrix for the best OA model.}
  \label{fig:MN40_cm}
 % \vspace{-10pt}
\end{figure}

%%%%%%%%%%% ModelNet40 table %%%%%%%%%%%
\begin{table}[ht]
\centering
%\small{
\begin{tabular}{ccc}
\toprule
\multicolumn{3}{c}{\textbf{ModelNet40}} \\
\midrule
{Method} & {OA } & {mAcc } \\
\midrule
3DShapeNets~\cite{wu20153d} & 84.7 & 77.3 \\
PointNet~\cite{qi2017pointnet} & 89.2 & 86.2 \\
MVCNN~\cite{su2015multi} & 90.1 & - \\
Ma et al.~\cite{ma2018learning} & 91.05 & - \\
KD-Network~\cite{klokov2017escape} & 91.8 & 88.5 \\
PointNet++~\cite{qi2017pointnet++} & 91.9 & - \\
PointCNN~\cite{li2018pointcnn} & 92.5 & 88.1 \\
OctFormer~\cite{wang2023octformer} & 92.7 & -\\
GVCNN~\cite{feng2018gvcnn} & 93.1 & - \\
PointNeXt~\cite{qian2022pointnext} & 93.2 & 90.8 \\
PointMamba~\cite{liang2024pointmamba} & 93.6 & -\\
Point-Transformer~\cite{zhao2021point} & 93.7 & 90.6\\
Point-Bert~\cite{yu2022point} & 93.8 & -\\
PointMLP~\cite{ma2022rethinking} & 94.5 & 91.4 \\
MHBN~\cite{yu2018multi} & 94.7 & 93.1 \\
PointGPT~\cite{chen2023pointgpt} & 94.9 & -\\
Pointview-GCN~\cite{mohammadi2021pointview} & 95.4 & - \\
VRN Ensemble~\cite{brock2016generative} & 95.54 & - \\
HGNN~\cite{feng2019hypergraph} & 96.6 & - \\
RotationNet~\cite{kanezaki2019rotationnet} & 97.37 & 96.29 \\
\midrule
\taco~ (Ours) & \textbf{99.05} & \textbf{97.97} \\
\bottomrule
\end{tabular}
%}
\caption{Classification accuracy (\%) results on ModelNet40 dataset.}
\label{tab:modelnet40_results}
\end{table}

%%%%%%%%%%% ModelNet10 table %%%%%%%%%%%
\begin{table}[ht]
\centering
%\small{
\begin{tabular}{ccc}
\toprule
\multicolumn{3}{c}{\textbf{ModelNet10}} \\
\midrule
{Method} & {OA } & {mAcc } \\
\midrule
3DShapeNets~\cite{wu20153d} & 83.54 & - \\
3D-GAN~\cite{wu2016learning} & 91 & - \\
VoxNet~\cite{maturana2015voxnet} & 92 & - \\
Primitive-GAN~\cite{khan2019unsupervised} & 92.2 & - \\
ORION~\cite{sedaghat2016orientation} & 93.9 & - \\
KD-Network~\cite{klokov2017escape} & 94 & 93.5 \\
MHBN~\cite{yu2018multi} & 95 & 95 \\
3DmFV-Net~\cite{ben20183dmfv} & 95.2 & - \\
Point2Sequence~\cite{liu2019point2sequence} & 95.3 & 95.1 \\
A-CNN~\cite{komarichev2019cnn} & 95.5 & 95.3 \\
RCNet-E~\cite{wu2019point} & 95.6 & - \\
PANORAMA-ENN~\cite{sfikas2018ensemble} & 96.85 & - \\
VRN Ensemble~\cite{brock2016generative} & 97.14 & - \\
Grid-GCN~\cite{xu2020grid} & 97.5 & 97.4 \\
RotationNet~\cite{kanezaki2019rotationnet} & 98.9 & - \\
\midrule
\taco~ (Ours) & \textbf{99.52} & \textbf{99.52} \\
\bottomrule
\end{tabular}
%}
\caption{Classification accuracy (\%) results on ModelNet10 dataset.}
\label{tab:modelnet10_results}
\end{table}

%\vspace{-0.1in}
%%%%%%%% Table for OMNIObject3d %%%%%%%%%%%%%%%%%%%%
\begin{table}
\centering
\begin{tabular}{lc}
\toprule
Method & OA \\
\midrule
DGCNN~\cite{wang2019dynamic}     & 44.8 \\
PointNet~\cite{qi2017pointnet}    & 46.6 \\
PointNet++~\cite{qi2017pointnet++} & 40.7 \\
RSCNN~\cite{liu2019relation}     & 39.3 \\
SimpleView~\cite{goyal2021revisiting} & 47.6 \\
GDANet~\cite{xu2021learning}   & 49.7 \\
% PAConv~\cite{paconv}   & 40.3 \\
CurveNet~\cite{xiang2021walk} & 50.0 \\
PCT~\cite{guo2021pct}         & 45.9 \\
RPC~\cite{ren2022benchmarking}         & 47.2 \\
\midrule
\taco~ (Ours) & \textbf{58.9}\\
\bottomrule
\end{tabular}
\caption{Classification accuracy (\%) results on OmniObject3D dataset~\cite{wu2023omniobject3d}.}
\label{tab:omniobject3d}
\end{table}

%%%%%% SONN PLOT: Length vs. Accuracy
\begin{figure}[ht!]
    \centering
    \includegraphics[width=0.65\linewidth]{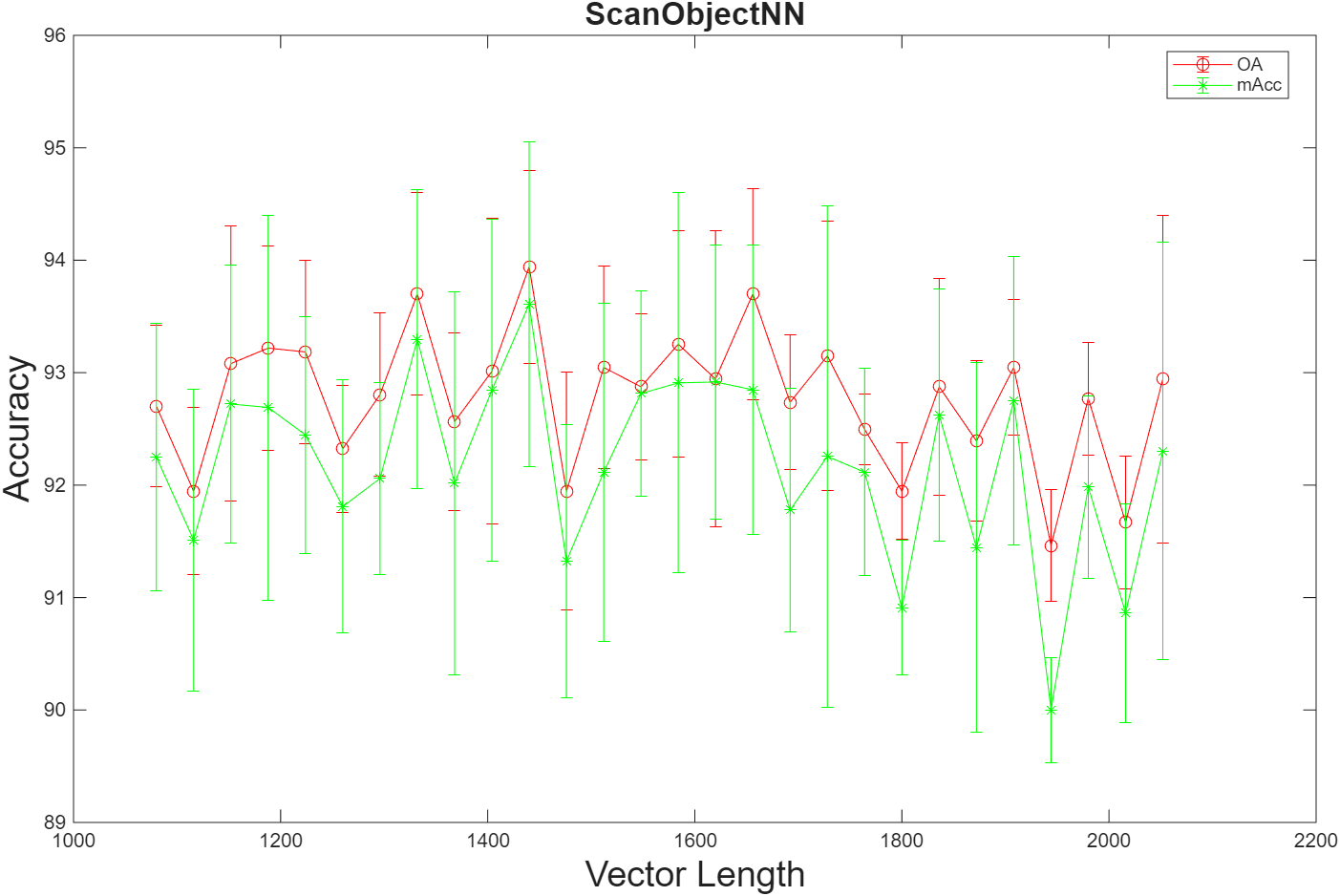}
    \caption{ScanObjectNN dataset: Change in OA/mAcc w.r.t. feature vector length (i.e., different number of radial filtrations).}
    \label{fig:Sonn_lvsacc}
\end{figure}

%%%%%% length vs. accuracy plots (medical %%%%%%)
\begin{figure}[ht!]
    \centering
    %\vspace{-20pt}
    \begin{tabular}{cc}
        \hspace{-0.1in}\includegraphics[scale=0.5]{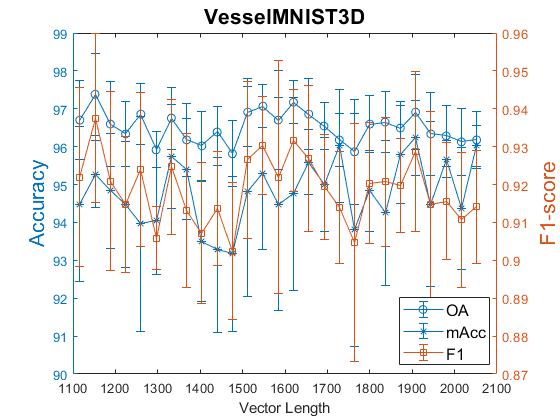} & 
        \hspace{-0.15in}\includegraphics[scale=0.5]{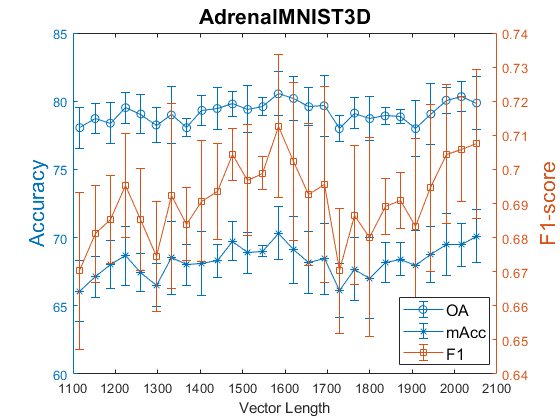}\\
    \end{tabular}
    \caption{Change in performance with various radial center points and consequent feature vector lengths. }
    \label{fig:vessel_adrenal}
    %\vspace{-10pt}
\end{figure}

%%%%%%%%%%% adrenal and Vessel 3D results %%%%%%%%%%%
\begin{table}[ht]
\centering

% First table
\begin{minipage}[t]{0.48\textwidth}
\centering
\caption{VesselMNIST3D~\cite{yang2020intra}: The best mAcc (\%) and F1-scores from \cite{yang2020intra} are reported along with our average results at the bottom.}
\addtolength{\tabcolsep}{-0.4em}
\begin{tabular}{lcc}
\toprule
Network & mAcc & F1 \\
\midrule
PointNet++~\cite{qi2017pointnet++} & 93.52 & 0.90 \\
SpiderCNN~\cite{xu2018spidercnn} & 92.59 & 0.87 \\
SO-Net~\cite{li2018so} & 91.50 & 0.88 \\
PointCNN~\cite{li2018pointcnn} & 92.38 & 0.90 \\
DGCNN~\cite{wang2019dynamic} & 90.67 & 0.86 \\
PointNet~\cite{qi2017pointnet} & 81.62 & 0.69 \\
\midrule
\taco~ (Ours) & \textbf{95.28} & \textbf{0.94} \\
\bottomrule
\end{tabular}
\label{tab:vessel3d}
\end{minipage}
\hfill
% Second table
\begin{minipage}[t]{0.48\textwidth}
\centering
\caption{Overall accuracies (OA) in \% for AdrenalMNIST3D (A3D) and VesselMNIST3D (V3D) across different methods as benchmarked in \cite{yang2023medmnist} compared with \taco~.}
\addtolength{\tabcolsep}{-0.4em}
\begin{tabular}{lcc}
\toprule
Methods & A3D & V3D \\
\midrule
ResNet-18 + 2.5D & 77.2 & 84.6 \\
ResNet-18 + 3D & 72.1 & 87.7 \\
ResNet-18 + ACS & 75.4 & 92.8 \\
ResNet-50 + 2.5D & 76.3 & 87.7 \\
ResNet-50 + 3D & 74.5 & 91.8 \\
ResNet-50 + ACS & 75.8 & 85.8 \\
auto-sklearn~\cite{feurer2015efficient} & 80.2 & 91.5 \\
AutoKeras~\cite{jin2019auto} & 70.5 & 89.4 \\
\midrule
\taco~ (Ours) & \textbf{80.54} & \textbf{97.38} \\
\bottomrule
%\vspace{-20pt}
\end{tabular}
\label{tab:adrenal}
\end{minipage}

\end{table}

%%%%%%%%% corrupted dataset results
\begin{table}[ht!]
    \centering
    %\vspace{-15pt}
    \caption{Accuracies (\%) when trained on clean ModelNet40 and tested on perturbed ModelNet40 test set. %\changeAD{need to be checked: cutout,  density-inc}.
    }
    %\addtolength{\tabcolsep}{-0.4em}

    \begin{tabular}{lllll}
    \toprule
    &\multicolumn{2}{c}{Low}&\multicolumn{2}{c}{High}\\
    \cmidrule{2-5}
    Perturbation & OA & mAcc & OA & mAcc\\ \midrule
    
    None (best model) & \multicolumn{2}{c}{--} & 99.15 & 98.37\\
    %\midrule 
    \midrule
    %Local-Density-inc & 75.69 & 73.90 & 86.35 & 84.18\\
    %Local-Density-dec &  98.99 & 98.18 & 90.76 & 87.18\\
   %Cutout &  99.19 & 98.20 & 97.53 & 94.67\\
   Downsampling & 99.11 & 98.34 & 85.21 & 83.09\\
    %\midrule    
  Uniform & 98.82 & 97.17 & 70.10 & 63.08\\
   Gaussian &  94.08 & 90.35 & 54.50 & 47.58\\
    %(\changeAD{final, L})  \cancel{Impulse-1} & 11.79 & 9.68\\

  Upsampling &  91.90 & 88.02 & 50.28 & 47.19\\
  %(\changeAD{final, H})  \cancel{Background noise-1} & 4.05 & 2.50\\
    %\midrule    

Rotation &  97.49 & 95.93 & 49.39 & 45.00\\
   Shear &  98.74 & 97.82 & 61.91 & 62.89\\
  FFD &  98.70 & 97.69 & 76.34 & 71.16\\

   RBF  & 99.07 & 98.02 & 83.47 & 77.07\\
   Inverse-RBF & 99.15 & 98.47 & 86.63 & 80.04\\
   Impulse & 52.88 & 41.80 & 24.19 & 18.82\\
     \bottomrule
    \end{tabular}
    \label{tab:modelnet-C}
    %\vspace{-10pt}
\end{table}

%%%%%%% comparison with XG-Boost %%%%%%%%%%%%%%%%%%%%%
\begin{table}[ht]
\caption{Comparison with XGBoost and Random Forest classifiers (dataset: ModelNet40)}
    \centering
    \begin{tabular}{cccc}
    \toprule
        Algorithm & OA (\%) & Training Time (mins.) & Test Throughput\\
        \midrule
        Random Forest & 78.35 & \textbf{0.55} & 18,985\\
        XGBoost & 81.35 & 4.35  & \textbf{61,700}\\
        \midrule
        \taco~ (Ours) & \textbf{99.05} & 2.50 & 16,454\\
        \bottomrule
    \end{tabular}
    \label{tab:xgboost_comp}
\end{table}

\begin{table}[ht]
\centering
\begin{tabular}{lc}
\toprule
Methods & Param. (M) \\
\midrule
PointNet~\cite{qi2017pointnet} & 3.5 \\
PointNet++~\cite{qi2017pointnet++} & 1.5 \\
MVTN~\cite{hamdi2021mvtn} & 3.5 \\
DGCNN~\cite{wang2019dynamic} & 1.8 \\
PointNeXt~\cite{qian2022pointnext} & 1.4 \\
PCT~\cite{guo2021pct} & 2.9 \\
Point-BERT~\cite{yu2022point} & 22.1 \\
PointGPT~\cite{chen2023pointgpt} & 29.2 \\
PointMamba~\cite{liang2024pointmamba} & 12.3 \\
\midrule
\taco~ (Ours) & \textbf{0.72} \\
\bottomrule
\end{tabular}
\caption{Comparison of network parameters (in millions) of different models for the ModelNet40 dataset.}
\label{tab:params_only}
\end{table}

\subsubsection{Comparison with non-deep learning algorithms}
We show that the proposed 1D CNN model thoroughly outperforms standard non-deep learning methods such as XGBoost~\cite{chen2016xgboost} and Random Forest classifiers. We chose the ModelNet40 dataset for this test while using the topological feature vector length of $1728$ as mentioned in the paper. For XGBoost and Random Forest classifiers, we used Python's \texttt{xgboost} and \texttt{scikit-learn} packages, respectively, with default options.

The comparison result is presented in Table \ref{tab:xgboost_comp}. Our proposed \taco~ achieves $17.7\%$ and $20.7\%$ higher accuracies than XGBoost and Random Forest, respectively. Notably, XGBoost required $74\%$ more training time than that of \taco~. On the other hand, the throughput of XGBoost was $2.75$x faster than  \taco~. Overall, these results empirically demonstrate the superiority of the 1D CNN within the proposed \taco~ framework in learning meaningful representations for object classification from the input topological vectors.

%%%%%%%% Shape retrieval result %%%%%%%
\begin{table}
\centering
\begin{tabular}{lc}
\toprule
Method & Retrieval mAP\\
\midrule
3D ShapeNets~\cite{wu20153d} & 49.2 \\
Densepoint~\cite{liu2019densepoint} & 88.5 \\
PVNet~\cite{you2018pvnet} & 89.5 \\
MVCNN~\cite{su2015multi} & 80.2 \\
MLVCNN~\cite{yu2018multi} & 92.2 \\
MVTN~\cite{hamdi2021mvtn} & 92.9 \\
Latformer~\cite{he2024latformer} & 97.4\\
\midrule
\taco~ (ours) & \textbf{99.33}\\
\bottomrule
\end{tabular}
\caption{Shape retrieval (mAP) results on ModelNet40.}
\label{tab:modelnet40_retrieval}
\end{table}

\clearpage

\end{document}